\documentclass{article}

\usepackage{microtype}
\usepackage{graphicx}
\usepackage{subcaption}
\usepackage{booktabs} 
\usepackage{enumitem}

\usepackage{hyperref}

\usepackage{xcolor}
\definecolor{DeepGreen}{RGB}{0,100,0} 
\DeclareRobustCommand{\algHL}[2]{%
  \begingroup
  \setlength{\fboxsep}{0.8pt}%
  \colorbox{#1}{\textcolor{black}{#2}}%
  \endgroup
}
\DeclareRobustCommand{\algLHM}[1]{\algHL{red!18}{#1}}      
\DeclareRobustCommand{\algYZ}[1]{\algHL{magenta!22}{#1}}    
\DeclareRobustCommand{\algCP}[1]{\algHL{green!25}{#1}}    

\DeclareRobustCommand{\algHLwrap}[2]{%
  \begingroup
  \setlength{\fboxsep}{0.8pt}%
  \colorbox{#1}{%
    \parbox[t]{\dimexpr\linewidth-2\fboxsep\relax}{\raggedright\textcolor{black}{#2}}%
  }%
  \endgroup
}

\DeclareRobustCommand{\algYZw}[1]{\algHLwrap{magenta!22}{#1}}

\usepackage[preprint]{icml2026}

\usepackage{amsmath}
\usepackage{amssymb}
\usepackage{mathtools}
\usepackage{amsthm}
\graphicspath{{figures/}}

\usepackage[capitalize,noabbrev]{cleveref}

\theoremstyle{plain}
\newtheorem{theorem}{Theorem}[section]

\newtheorem{lemma}[theorem]{Lemma}

\theoremstyle{definition}

\newtheorem{assumption}[theorem]{Assumption}
\theoremstyle{remark}

\usepackage[textsize=tiny]{todonotes}

\icmltitlerunning{TrasMuon: Trust-Region Adaptive Scaling for Orthogonalized Momentum Optimizers}

\begin{document}

\twocolumn[
  \icmltitle{TrasMuon: Trust-Region Adaptive Scaling for Orthogonalized Momentum Optimizers}




  \icmlsetsymbol{equal}{*}

  \begin{icmlauthorlist}
    \icmlauthor{Peng Cheng}{equal,aaa}
    \icmlauthor{Jiucheng Zang}{sch}
    \icmlauthor{Qingnan Li}{aaa}
    \icmlauthor{Liheng Ma}{mila}
    \icmlauthor{Jimmy Jian}{aaa}
    \icmlauthor{Boxing Chen}{aaa}
    \icmlauthor{Yingxue Zhang}{aaa}    
    \icmlauthor{Yufei Cui}{aaa}
    \icmlauthor{Wen Tong}{aaa}
  \end{icmlauthorlist}

  \icmlaffiliation{sch}{Department of Combinatorics and Optimization, University of Waterloo, Waterloo, Canada}
  \icmlaffiliation{aaa}{Huawei Canadian Research Institute, Canada}
  \icmlaffiliation{mila}{McGill University \& Mila-Quebec AI Institue, Canada}
\icmlcorrespondingauthor{Peng Cheng}{peng.cheng.hit@gmail.com}

  \icmlkeywords{Optimizer, Muon}

  \vskip 0.3in
]



\printAffiliationsAndNotice{}  

\begin{abstract}
Muon-style optimizers leverage Newton-Schulz (NS) iterations to orthogonalize updates, yielding update geometries that often outperform Adam-series methods. However, this orthogonalization discards magnitude information, rendering training sensitive to step-size hyperparameters and vulnerable to high-energy bursts. To mitigate this, we introduce TrasMuon (\textbf{T}rust \textbf{R}egion \textbf{A}daptive \textbf{S}caling \textbf{Muon}). TrasMuon preserves the near-isometric geometry of Muon while stabilizing magnitudes through (i) global RMS calibration and (ii) energy-based trust-region clipping. We demonstrate that while reintroducing adaptive scaling improves optimization efficiency, it typically exacerbates instability due to high-energy outliers. TrasMuon addresses this by defining a trust region based on relative energy ratios, confining updates to a stable zone. Empirical experiments on vision and language models demonstrate that TrasMuon converges faster than baselines. Furthermore, experiments without warmup stages confirm TrasMuon’s superior stability and robustness.

\end{abstract}

\section{Introduction}

Optimizer choice remains a major bottleneck for training modern foundation models, affecting convergence speed, stability, and overall compute cost at scale~\cite{deepseek-aiDeepSeekV3TechnicalReport2025, IntroducingGPT522026, IntroducingClaudeOpus}.
Despite substantial progress in large-scale training recipes, practitioners still face highly heterogeneous gradient landscapes and heavy-tailed/outlier updates that can trigger loss spikes and narrow the stable learning-rate region~\cite{behrouzATLASLearningOptimally2025, teamKimiK2Open2025a, parkOutlierSafePreTrainingRobust2025}.
A dominant baseline family is diagonal adaptivity, including Adam and its variants~\cite{kingma2017adammethodstochasticoptimization,loshchilov2019decoupledweightdecayregularization, pagliardini2024ademamixoptimizerbetterfaster,marfinetzEvolvingDeepLearning2025}, which provides robust coordinate-wise magnitude control but does not explicitly leverage matrix-level update structure.

Recent work on momentum orthogonalization has revived interest in matrix-structured updates for Transformers~\cite{bernstein2025deriving,jordan2024muon}.
Muon-style optimizers reshape update directions via finite-step Newton--Schulz (NS) iterations toward a near-isometric factor, encouraging global feature mixing and reducing spectral anisotropy~\cite{largeScalableOptimizationModular2024, bernsteinOldOptimizerNew2024a, bernsteinModularDualityDeep2024a}.
However, Muon-style orthogonalization predominantly constrains \emph{geometry} (directional structure), while \emph{effective step magnitudes} remain sensitive to calibration and training phase.
Moreover, real training signals are often heavy-tailed and feature-localized: transient bursts can concentrate energy on a small subset of feature axes, leading to loss spikes and a narrow stability window if magnitudes are not carefully controlled~\cite{behrouzATLASLearningOptimally2025, teamKimiK2Open2025a, parkOutlierSafePreTrainingRobust2025}.

\noindent\textbf{Why adaptive scaling?}
In practice, a single global learning rate can correspond to markedly different parameter-space update norms across layers, tensor shapes, and early-vs-late training regimes.
While near-isometric mixing can improve optimization efficiency, the lack of explicit magnitude calibration increases reliance on warmup length and schedule tuning.
This motivates a lightweight, global RMS calibration to make step sizes more comparable in parameter space and to widen the usable learning-rate region.

\noindent\textbf{Why a trust region on top of scaling?}
Global RMS calibration controls overall step magnitude but does not directly constrain \emph{energy concentration} across feature axes.
Under heavy-tailed, feature-localized bursts, a small number of columns can transiently dominate the update; in this regime, na\"ively magnitude-corrected steps may still translate burst concentration into loss spikes.
We therefore impose a \emph{relative-energy trust region} that enforces a stable bound on per-feature energy ratios, e.g., $E_j/E_{\mathrm{ref}}\le \tau$, where $E_j$ denotes column energy and $E_{\mathrm{ref}}$ is a robust reference.
This is implemented by a multiplicative damping $c_t\in[c_{\min},1]^{d_{\mathrm{in}}}$ that selectively suppresses bursty feature axes, while largely preserving the Muon-style structured mixing factor.

\noindent\textbf{Our approach.}
We propose \textbf{TrasMuon} (\textbf{T}rust-\textbf{R}egion \textbf{A}daptive \textbf{S}caling for \textbf{Muon}), which factorizes matrix updates into a structured mixing factor and lightweight magnitude controls through multiplicative coupling.
For a matrix parameter $W\in\mathbb{R}^{d_{\mathrm{out}}\times d_{\mathrm{in}}}$, TrasMuon applies
\begin{equation}
\Delta W_t \;=\; -\,\hat{\eta}_t \, O_t^{\mathrm{base}} \,\mathrm{diag}(c_t),
\qquad c_t\in[c_{\min},1]^{d_{\mathrm{in}}}.
\label{eq:intro_update}
\end{equation}
Here $O_t^{\mathrm{base}}$ is a Muon-style near-isometric update factor obtained by NS orthogonalization, combined with lightweight row-wise second-moment scaling (NorMuon-style)~\cite{liNorMuonMakingMuon2025a}.
Magnitude is controlled in two complementary ways:
(i) an RMS-calibrated step size $\hat{\eta}_t$ that reduces step-size sensitivity and improves cross-layer calibration~\cite{bernsteinModularDualityDeep2024a,largeScalableOptimizationModular2024};
and (ii) a feature-wise trust-region damping $c_t$ computed from relative column-energy ratios, which selectively suppresses bursty feature axes while largely preserving the structured mixing factor.
To reduce dependence on schedule design and warmup length, we stabilize the damping signal over time using effective-time weighting, with weights tied to the effective step size (schedule-free style averaging)~\cite{defazio2024road}.

\noindent\textbf{Contributions:}
\begin{itemize}[topsep=0pt, itemsep=1pt, parsep=0pt, partopsep=0pt]
  \item \textbf{Algorithm.} We introduce \textsc{TrasMuon}, combining Muon-style near-isometric mixing with global RMS calibration and a relative-energy trust region for feature-localized bursts, plus temporally stabilized damping via effective-time weighting.
  \item \textbf{Foundation-model training evidence.} \textsc{TrasMuon} improves early-stage convergence and stability in both warmup-enabled and warmup-free settings, indicating reduced step-size and schedule sensitivity.
  \item \textbf{Robustness.} \textsc{TrasMuon} consistently reduces loss spikes and yields stronger or more reliable final performance than relevant baselines under heavy-tailed and structured non-stationarity.
\end{itemize}

\noindent\textbf{Implication for large-scale training.}
By making matrix-structured (Muon-family) updates compatible with predictable step magnitudes and outlier-resilient dynamics, \textsc{TrasMuon} not only converges faster than vanilla Muon, but also more stable compared to other optimizers with adaptive optimizers, with reduced reliance on delicate warmup/schedule tuning.
These make TrasMuon optimizers closer to a practical, drop-in option for large-model pretraining at scale under heavy-tailed training noise.


\section{Related Work}
\label{sec:related_work}

\paragraph{Diagonal preconditioning and Adam-style optimizers.}
Adaptive methods based on inexpensive diagonal second-moment estimates, most notably Adam and AdamW~\cite{kingma2017adammethodstochasticoptimization,loshchilov2019decoupledweightdecayregularization}, are widely used due to their robustness across architectures and training regimes.
A large body of work refines Adam-style magnitude control via improved moment estimators, decay rules, or mixing strategies, aiming to better match curvature surrogates and gradient-noise statistics while retaining efficiency~\cite{yuan2024mars,pagliardini2024ademamixoptimizerbetterfaster,marfinetzEvolvingDeepLearning2025,shao2025bds,gupta2024nesterov}.
These methods primarily operate coordinate-wise, which can stabilize training but may underutilize matrix-level structure available in weight tensors (e.g., feature mixing across columns).
TrasMuon is complementary: it retains structured mixing in matrix updates while introducing axis-selective magnitude control, rather than reverting to fully diagonal update geometry.

\paragraph{Matrix and block-structured preconditioning beyond diagonal adaptivity.}
Beyond diagonal adaptivity, classical work explores richer (block-structured) preconditioning to better capture curvature while remaining computationally feasible, including Kronecker-factored curvature approximations such as K-FAC~\cite{martens2015kfac} and tensor/matrix second-moment preconditioners such as Shampoo~\cite{gupta2018shampoo}.
Related low-memory approaches (e.g., Adafactor) exploit factored second-moment structure for large models~\cite{shazeer2018adafactor}.
Another practical line controls step magnitudes via layerwise norm or trust-ratio scaling, comparing parameter and update norms as in LARS/LAMB~\cite{you2017lars,you2019lamb}.
TrasMuon differs in that it does not estimate curvature factors; instead it constructs a near-isometric mixing factor via Newton--Schulz orthogonalization and stabilizes magnitudes using global RMS calibration together with a relative-energy constraint on feature axes.

\paragraph{Orthogonalization-based directions and Muon-style updates.}
Recent work revisits non-diagonal update geometry for matrix parameters using finite-step Newton--Schulz iterations to approximate polar factors of momentum updates, producing near-orthogonal (near-isometric) directions that can improve training on modern Transformer architectures~\cite{jordan2024muon,bernstein2025deriving}.
Broader perspectives connect orthogonalized updates to modular optimization and manifold-inspired parameterizations, including variants that incorporate normalization, duality, or layerwise geometry constraints~\cite{bernsteinModularDualityDeep2024a,largeScalableOptimizationModular2024}.
A growing set of Muon-inspired variants further explores practical refinements to orthogonalized updates and their training behavior (e.g.,~\cite{pethick2025training,ahn2025dion,kumar2025curvadion,khaled2025muonbp,riabinin2025gluon,liNorMuonMakingMuon2025a}).
TrasMuon builds on this line by explicitly factorizing direction construction and magnitude stabilization: it preserves a Muon-style structured mixing factor while targeting a distinct failure mode---bursty, axis-localized energy spikes---via relative-energy damping and temporal smoothing.

\paragraph{Trust-region magnitude control, clipping, and effective-time averaging.}
Magnitude control is a long-standing stability tool, ranging from global gradient clipping and its variants~\cite{pascanu2013difficulty,brock2021agc,allouah2025adaptiveclippingFL} to trust-region methods that constrain per-step change~\cite{conn2000trust}.
In large-scale training, related magnitude-control heuristics also include trust-ratio normalization (e.g., LARS/LAMB)~\cite{you2017lars,you2019lamb} and classical adaptive scaling approaches such as AdaGrad~\cite{duchi2011adagrad}.
Such mechanisms are often global and do not identify which feature axes dominate instability.
In contrast, TrasMuon applies \emph{feature-wise} damping based on a \emph{relative-energy} constraint (e.g., $E_j/E_{\mathrm{ref}}\le\tau$), selectively suppressing bursty columns while largely preserving structured mixing; global RMS calibration further improves cross-layer step-size comparability~\cite{bernsteinModularDualityDeep2024a,largeScalableOptimizationModular2024}.
Finally, schedule-free and effective-time weighted averaging smooth training dynamics and reduce sensitivity to warmup length and training horizon~\cite{defazio2024road}, closely related to classical Polyak--Ruppert averaging~\cite{polyak1992averaging,ruppert1988rm} and multi-timescale averaging mechanisms such as SWA and Lookahead~\cite{izmailov2018swa,zhang2019lookahead}.
TrasMuon adopts effective-time weighted smoothing to stabilize feature-wise energy statistics under nonstationary bursts.

\section{Methodology}

\label{sec:methodology}
\subsection{TrasMuon Algorithm}
We propose \textbf{TrasMuon} (\textbf{T}rust-\textbf{R}egion \textbf{A}daptive \textbf{S}caling for \textbf{Muon}),  which \emph{factorizes} matrix updates into a structured mixing factor and lightweight magnitude controls (global RMS calibration and feature-wise damping), as summarized in Alg.~\ref{alg:trasmuon_highlight}.

\begin{algorithm}[ht]
    \scriptsize
  \caption{\textsc{TrasMuon}: \protect Muon + \algLHM{Adaptive Scaling} + \protect\algCP{Trust Region} + \protect\algYZ{Schedule-Free Smoothed}}
  \label{alg:trasmuon_highlight}
  \begin{algorithmic}
    \STATE {\bfseries Input:} $W\!\in\!\mathbb{R}^{d_{\mathrm{out}}\times d_{\mathrm{in}}}$, base lr $\eta$, $\beta_1,\beta_2$, $\epsilon$, NS steps $T$, weight decay $\lambda$, $c_{\min},\alpha,\beta_E,\beta_c$, trigger $k$, update period $K$, warmup $T_w$, mix $\rho$
    \STATE Initialize $M\!\leftarrow\!0$, $v^{\mathrm{row}}\!\leftarrow\!0$, $E^{\mathrm{ref}}\!\leftarrow\!0$, $c^{\mathrm{ema}}\!\leftarrow\!\mathbf{1}$, $c^{\mathrm{last}}\!\leftarrow\!\mathbf{1}$, $S\!\leftarrow\!0$, $C\!\leftarrow\!\mathbf{0}$, $\gamma_t \!\leftarrow\! \eta$
    \REPEAT
      \STATE $G \leftarrow \nabla_W \mathcal{L}(W)$ ;\;\; $W \leftarrow (1-\eta\lambda)W$
      \STATE {\bfseries Momentum:} $M \leftarrow \beta_1 M + (1-\beta_1)G$
      \STATE {\bfseries Orthogonalized direction:} $O \leftarrow \mathrm{NS}(\tilde{M};T)$
      \STATE $v^{\mathrm{row}} \leftarrow \beta_2 v^{\mathrm{row}} + (1-\beta_2)\,\mathrm{mean}_{j}(O_{\cdot j}^{2})$
      \STATE $O^{\mathrm{base}} \leftarrow \mathrm{diag}\!\big((v^{\mathrm{row}}+\epsilon)^{-1/2}\big)\,O$

      \STATE \algLHM{{\bfseries Calibration:} $\hat{\eta} \leftarrow \eta\,\dfrac{\sqrt{d_{\mathrm{out}}d_{\mathrm{in}}}}{\|O^{\mathrm{base}}\|_F+\epsilon}$}

      \STATE \algCP{{\bfseries Column energy:} $E_j \leftarrow \sum_i M_{ij}^2$}
      \STATE \algCP{{\bfseries Robust reference:} $E^{\mathrm{cur}} \leftarrow \mathrm{Quantile}_{0.5}(\{E_j\})$}
      \STATE \algCP{{\bfseries EMA smooth:} $E^{\mathrm{ref}} \leftarrow \beta_E E^{\mathrm{ref}} + (1-\beta_E)E^{\mathrm{cur}}$}
      \STATE \algYZw{{\bfseries Schedule-free accumulators:} $S \leftarrow S+\gamma_t^2$, $C \leftarrow C+\gamma_t^2 c^{\mathrm{last}}$, $c^{\mathrm{avg}} \leftarrow C/(S+\epsilon)$}

      \IF{$t>T_w$ {\bfseries and} $t\bmod K=0$}
        \STATE \algCP{$r_j \leftarrow E_j/(E^{\mathrm{ref}}+\epsilon)$}
        \STATE \algCP{$c_j^{\mathrm{clip}} \leftarrow \mathrm{clip}\Big( \frac{1}{1+\alpha\log(1+r_j)},\,c_{\min},\,1\Big)$}
        \STATE \algCP{{\bfseries Trigger (optional):} $c_j^{\mathrm{inst}} \leftarrow c_j^{\mathrm{clip}}$ if $r_j>k$ else $1$}
        \STATE \algCP{{\bfseries EMA smooth:} $c^{\mathrm{ema}} \leftarrow \beta_c c^{\mathrm{ema}} + (1-\beta_c)c^{\mathrm{inst}}$}
      \ENDIF
      \STATE \algYZw{{\bfseries Long-Short term Mixing:} $c \leftarrow \mathbf{1}$ if $t\le T_w$ else $(1-\rho)c^{\mathrm{ema}}+\rho c^{\mathrm{avg}}$}
      \STATE \algYZw{$c^{\mathrm{last}} \leftarrow c$ \COMMENT{cached between updates}}

      \STATE {\bfseries Update:} $W \leftarrow W - \hat{\eta}\,\big(O^{\mathrm{base}}\odot \mathrm{ExpandCols}(c)\big)$
    \UNTIL{training ends}
  \end{algorithmic}
\end{algorithm}

For a matrix parameter $W\!\in\!\mathbb{R}^{d_{\mathrm{out}}\times d_{\mathrm{in}}}$ with gradient
$G_t=\nabla_W\mathcal{L}(W_t)$, TrasMuon applies the multiplicative update
\begin{equation}
\Delta W_t \;=\; -\,\hat{\eta}_t \, O_t^{\mathrm{base}} \,\mathrm{diag}(c_t),
\label{eq:trasmuon_update_main}
\end{equation}
where $O_t^{\mathrm{base}}$ is a structured Muon-style direction, $\hat{\eta}_t$ is an RMS-calibrated  step size,
and $c_t\in[c_{\min},1]^{d_{\mathrm{in}}}$ is a damping-only feature (column-wise) clip.

\paragraph{Orthogonalized direction.}
We maintain momentum
\begin{equation}
M_t=\beta_1 M_{t-1}+(1-\beta_1)G_t,
\label{eq:momentum_main}
\end{equation}
remove scale via RMS pre-normalization
\begin{equation}
\tilde{M}_t \;=\; \frac{M_t}{\|M_t\|_F/\sqrt{d_{\mathrm{out}}d_{\mathrm{in}}}+\epsilon},
\label{eq:prenorm_main}
\end{equation}
and approximate the polar factor by $T$ Newton-Schulz steps,
\begin{equation}
O_t \;\approx\; \mathrm{NS}(\tilde{M}_t;T),
\label{eq:ns_main}
\end{equation}
yielding a near-isometric direction robust to axis rotations.

\paragraph{Row-wise scaling calibration.}
We apply lightweight row-wise second-moment scaling (NorMuon-style~\cite{liNorMuonMakingMuon2025a})
\begin{equation}
v_t^{\mathrm{row}}=\beta_2 v_{t-1}^{\mathrm{row}}+(1-\beta_2)\,\mathrm{mean}_{j}(O_{t,\cdot j}^{\odot 2}),
\label{eq:rowscale_main}
\end{equation}
\begin{equation}
O_t^{\mathrm{base}}=\mathrm{diag}\!\big((v_t^{\mathrm{row}}+\epsilon)^{-1/2}\big)\,O_t.
\label{eq:rowscale_main1}
\end{equation}
We also calibrate the global step size by,
\begin{equation}
\hat{\eta}_t
\;=\;
\eta \cdot \frac{\sqrt{d_{\mathrm{out}}d_{\mathrm{in}}}}{\|O_t^{\mathrm{base}}\|_F+\epsilon},
\label{eq:tr_cal_main}
\end{equation}
which bounds the update RMS as $\|\Delta W_t\|_F/\sqrt{mn}\le \eta$ (since $c_t\le\mathbf{1}$), reducing sensitivity to layer shape and transient fluctuations~\cite{bernsteinModularDualityDeep2024a,largeScalableOptimizationModular2024}. 
Row-wise scaling serves as lightweight local conditioning, while global RMS calibration and trust-region damping provide the primary magnitude stabilization.

\paragraph{Trust-region clipping.}
This implements a relative-energy trust region by controlling the per-column ratio $r_{t,j}=E_{t,j}/(E_t^{\mathrm{ref}}+\epsilon)$, ensuring burst-dominated columns receive multiplicative damping. In particular, damping enforces an implicit constraint of the form $r_{t,j}\lesssim\tau$ in the effective update, for a tunable tolerance controlled by $\alpha$ (and optional trigger $k$).

We measure column energies on the pre-orthogonalization momentum $M_t$ to detect feature-localized bursts before NS mixing compresses global magnitudes and redistributes energy across axes.
\begin{equation}
E_{t,j}=\sum_{i=1}^{d_{\mathrm{out}}} M_{t,ij}^2,
\end{equation}
\begin{equation}
E_t^{\mathrm{cur}}=\mathrm{Quantile}_{0.5}(\{E_{t,j}\}),
\end{equation}
\begin{equation}
E_t^{\mathrm{ref}}=\beta_E E_{t-1}^{\mathrm{ref}}+(1-\beta_E)E_t^{\mathrm{cur}},
\label{eq:eref_main}
\end{equation}
and form $r_{t,j}=E_{t,j}/(E_t^{\mathrm{ref}}+\epsilon)$.
Using the median yields a high-breakdown reference, preventing sparse bursts from arbitrarily inflating $E_t^{\mathrm{ref}}$~\cite{hampel1986robust,huber1981robust}.
We then apply a smooth, damping-only clip
\begin{equation}
c_{t,j}^{\mathrm{raw}}=\frac{1}{1+\alpha\log(1+r_{t,j})},
\end{equation}
\begin{equation}
c_{t,j}^{\mathrm{clip}}=\mathrm{clip}\!\left(c_{t,j}^{\mathrm{raw}},c_{\min},1\right),
\label{eq:clip_main}
\end{equation}
(optionally triggered only when $r_{t,j}>k$), and set $c_t$ by temporal smoothing (more details seen in Appendix \ref{app:methodology}).

\paragraph{Schedule-free temporal smoothing.}
We smooth the instantaneous clip $c_t^{\mathrm{inst}}$ with EMA
\begin{equation}
c_t^{\mathrm{ema}}=\beta_c c_{t-1}^{\mathrm{ema}}+(1-\beta_c)c_t^{\mathrm{inst}},
\label{eq:ema_main}
\end{equation}
and optionally apply schedule-free averaging~\cite{defazio2024road} using $\gamma_t$ (default $\gamma_t=\eta$):
\begin{equation}
S_t=S_{t-1}+\gamma_t^2,\qquad
C_t=C_{t-1}+\gamma_t^2 c_{t-1}^{\mathrm{last}},
\end{equation}
\begin{equation}
c_t^{\mathrm{avg}}=\frac{C_t}{S_t+\epsilon},\qquad
c_t=(1-\rho)c_t^{\mathrm{ema}}+\rho c_t^{\mathrm{avg}}.
\label{eq:sf_main}
\end{equation}
We cache $c_t^{\mathrm{last}}\!\leftarrow\!c_t$ between gate updates to avoid bias when the raw clip is computed every $K$ steps. 
When $\gamma_t$ is constant, $c_t^{\mathrm{avg}}$ reduces to a long-horizon running average; when $\gamma_t$ varies (e.g., due to warmup/decay), the $\gamma_t^2$ weights align the smoothing with the effective step size as in schedule-free optimization~\cite{defazio2024road}.

\subsection{Convergence Analysis}
\label{sec:convergence}
We briefly summarize deterministic guarantees and a standard convergence framework for TrasMuon; full statements and proofs are deferred to Appendix~\ref{app:convergence}.
Recall $\Delta W_t=-\hat{\eta}_t U_t$ with $U_t=O_t^{\mathrm{base}}\mathrm{diag}(c_t)$ and $c_{t,j}\in[c_{\min},1]$.

\paragraph{Damping-only contraction.}
For any matrix $A$ and any $c\in[0,1]^n$, right-multiplication by $\mathrm{diag}(c)$ cannot increase the Frobenius norm:
\begin{equation}
\|A\,\mathrm{diag}(c)\|_F \le \|A\|_F.
\label{eq:damping_contraction_main}
\end{equation}

\paragraph{RMS calibration.}
With $\hat{\eta}_t=\eta\sqrt{mn}/(\|O_t^{\mathrm{base}}\|_F+\epsilon)$ and damping-only $c_t\le \mathbf{1}$, the update norm is uniformly bounded:
\begin{equation}
\|\Delta W_t\|_F \le \eta\sqrt{mn}\qquad \forall t,
\label{eq:tr_bound_main}
\end{equation}
independently of transient gradient spikes.

\paragraph{Stationarity under smoothness and alignment.}
Under standard $L$-smoothness and a mild alignment condition (Appendix~\ref{app:convergence}), TrasMuon satisfies an expected first-order stationarity bound of the form
\begin{equation}
\frac{1}{T}\sum_{t=0}^{T-1}\mathbb{E}\|\nabla f(W_t)\|_F^2
\;\le\;
\frac{\mathbb{E}[f(W_0)]-f^\star}{\mu\,\eta\,T}
\;+\;
\frac{L}{2\mu}\,\eta\,mn,
\label{eq:stationarity_main}
\end{equation}
for a constant $\mu>0$ capturing effective descent. Importantly, the EMA/schedule-free construction of $c_t$ only affects how the clip is computed, while the theory relies solely on the invariant $c_{t,j}\in[c_{\min},1]$.

\section{Experiments}

\subsection{Language Model Pretraining: Fast Early Descent and Late-Stage Dynamics}
\label{sec:QWEN}

\paragraph{Language model training setup.}
\textsc{TrasMuon} is evaluated in a controlled, short-run pretraining-style setting and compared against four baseline optimizers:
AdamW~\cite{loshchilov2019decoupledweightdecayregularization}, Muon~\cite{jordan2024muon}, Dion~\cite{ahn2025dion}, and NorMuon~\cite{liNorMuonMakingMuon2025a}.
All methods train decoder-only Transformer models from random initialization, including GPT-2~\cite{gpt2} and Qwen3-0.6B~\cite{yangQwen3TechnicalReport2025}, on FineWeb-Edu~\cite{fineweb-edu}.
Each run is executed for $1500$ optimization steps with sequence length $1024$ and a fixed global (effective) batch size of $1024$ across models and optimizers, corresponding to
$1500 \times 1024 \times 1024 \approx 1.57 \times 10^9$ training tokens.

\paragraph{Optimization hyperparameters and sensitivity.}
To isolate optimizer behavior under an identical training budget, all methods share the same learning rate $\eta=3.6\times 10^{-3}$ and weight decay $\lambda=5\times 10^{-3}$.
All runs follow the same default seeding behavior of the training stack.
For NorMuon in particular, the most influential optimizer-specific hyperparameter is the RMS normalization, which controls scale calibration of the orthogonalized update.
This section is treated as a fixed-hyperparameter pilot, where broader sweeps and multi-seed evaluations are left to future work.

\paragraph{Learning-rate schedule and reporting.}
A warmup--stable--decay learning-rate schedule is adopted, and results are reported for both warmup-enabled and warmup-free configurations. All other training components, including the data pipeline, batching, tokenization, model architecture, and compute budget, are kept identical across optimizers.
\begin{figure}[htb]
  \vskip 0.1in
  \centering
    \begin{subfigure}[b]{\columnwidth}
    \centering
    \includegraphics[width=0.9\linewidth]{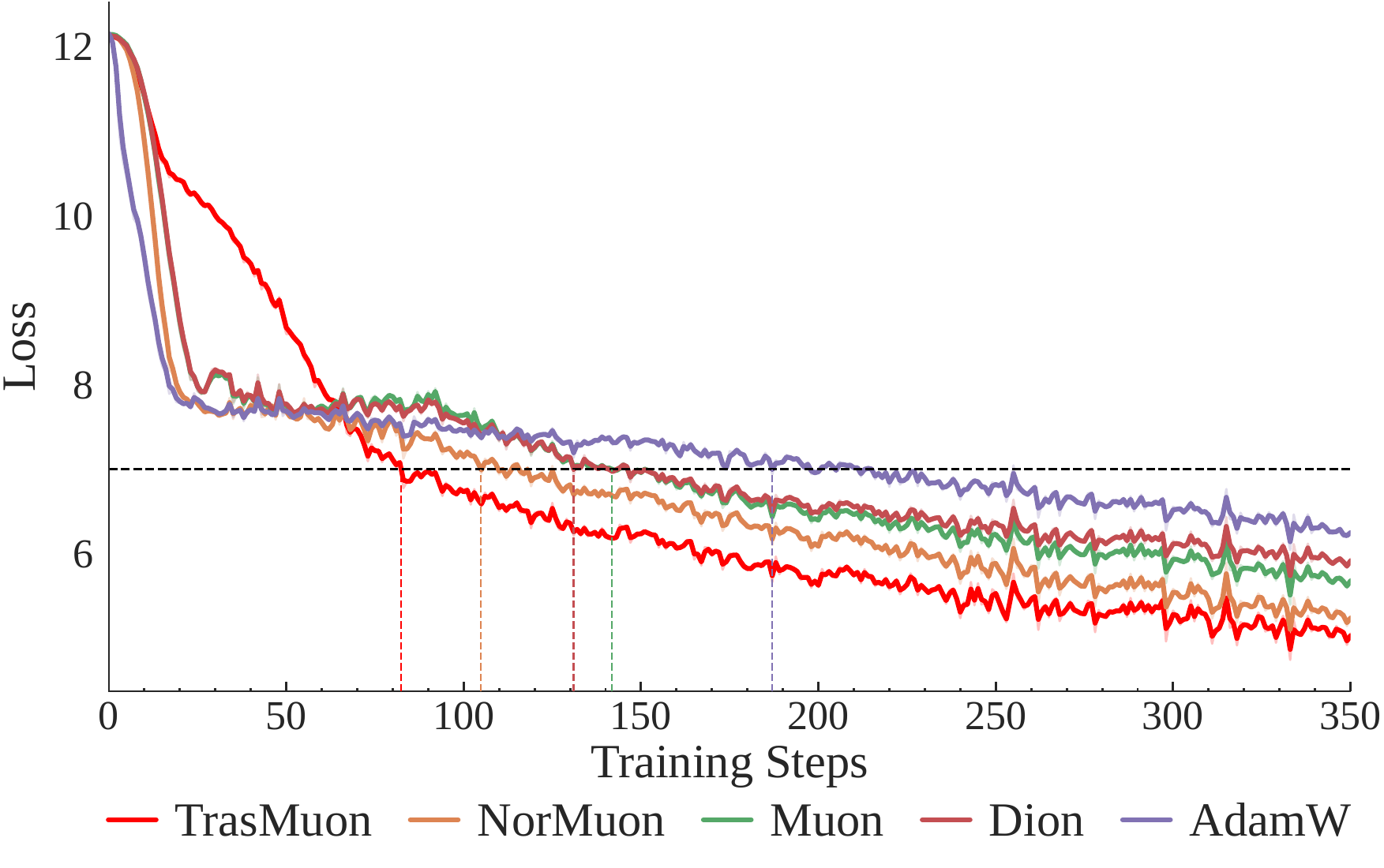}
    \caption{Qwen3-0.6B with warmup}
    \label{fig:qwen_warm01}
  \end{subfigure}
  \begin{subfigure}[b]{\columnwidth}
    \centering
    \includegraphics[width=0.9\linewidth]{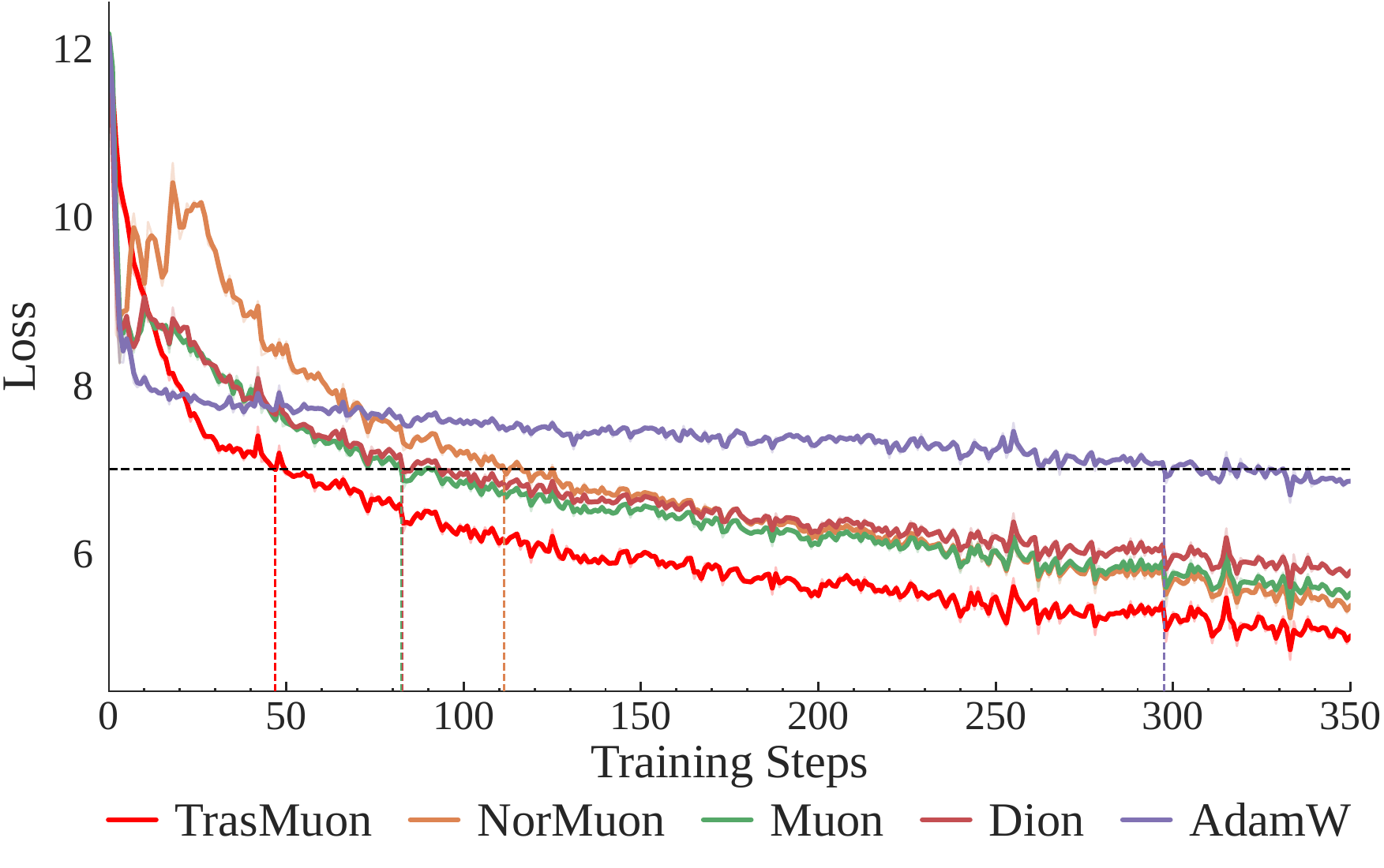}
    \caption{Qwen3-0.6B without warmup}
    \label{fig:qwen_nowarm}
  \end{subfigure}
  \vspace{0.01in} 
\caption{
Early-stage training dynamics for Qwen3-0.6B from scratch (steps 0--350) under a warmup-stable-decay schedule, comparing (a) warmup-enabled runs and (b) warmup-free runs. The curves are smoothed using a time-weighted exponential moving average (EMA) with smoothing factor $0.1$ for better visualization.
}
  \label{fig:qwen_warmup_compare}
\end{figure}

\paragraph{Training with warmup.}
Under the warmup-enabled schedule, all evaluated optimizers exhibit stable training in this fixed-hyperparameter pilot.
Although \textsc{TrasMuon} is not the fastest within the first few dozen steps, it begins to reduce loss more rapidly after approximately $80$ steps (illustrated in Fig.~\ref{fig:qwen_warm01}).
At a reference threshold of training loss ($=7.0$), \textsc{TrasMuon} reaches the target in $80$ steps, compared to $188$ steps for AdamW ($2.35\times$) and $140$ steps for Muon ($1.75\times$).\footnote{We use loss $=7.0$ as a representative checkpoint.}

\paragraph{Training without warmup.}
Without warmup, optimizer becomes more sensitive to step-size calibration.
Under the same shared learning rate and batch/sequence configuration, \textsc{TrasMuon} maintains a smooth loss trajectory in the early stage, while several baselines show larger loss oscillations as shown in Fig.~\ref{fig:qwen_nowarm}.
At the same reference threshold of training loss, \textsc{TrasMuon} reaches the target in $48$ steps, versus $298$ for AdamW ($6.21\times$) and $83$ for Muon ($1.73\times$).
Despite the widespread use of warmup in pretraining, this setting remains meaningful, as the choice of warmup length is typically determined heuristically.

\paragraph{Late-stage behavior under a fixed budget.}
Beyond early-stage speed, \textsc{TrasMuon} also attains the lowest (or comparable-lowest) training loss in the late-stage window under both warmup settings in this pilot run.
Late-stage curves and loss comparisons are reported in Fig.~\ref{fig:qwen_last_steps} of Appendix~\ref{app:Qwen3_last_step}.
The early-stage gap narrows as training progresses into a slower-loss regime, suggesting that the benefit is strongest when training dynamics are most nonstationary.

\noindent\textbf{Decreasing feature-wise energy concentration over training.}
A plausible explanation is that early training exhibits stronger feature-wise anisotropy, where a small subset of hidden dimensions (columns) contributes disproportionately to gradient or momentum energy.
In this regime, \textsc{TrasMuon}'s energy-based feature-wise clipping is engaged, selectively damping bursty columns and improving stability without discarding the structured Muon direction.
As representations become better calibrated, energy may become more uniformly distributed across feature axes, reducing the prevalence of strongly concentrated column-localized bursts; consequently, the clipping signal weakens and the effective update becomes closer to the NorMuon backbone.
This explanation is offered as a hypothesis and a direct characterization of activation/gradient anisotropy (e.g., via tracked energy ratios and gate statistics over time) is left to future work.


\subsection{Vision Transformer Experiments}
\label{sec:vit_experiments}

\paragraph{ImageNet-100: Vision Transformer Training.}
We evaluate the benefits of TrasMuon in a large-scale vision setting by training ViT-Base~\citep{dosovitskiy2020vit} on ImageNet-100 due to limited computational resources, a 100-class subset of ImageNet-1k (ILSVRC-2012)~\citep{deng2009imagenet}, followed the standard ILSVRC-2012 train/validation protocol.
Dataset construction, class specification, and implementation details for ViT training are provided in Appendix~\ref{app:imagenet100_data_source} and~\ref{app:imagenet100_exp_protocol}.
We compared AdamW, Muon, NorMuon, and \textsc{TrasMuon}, using identical training budgets and hyperparameters. Results show training loss and \emph{validation} top-1 accuracy, aggregated over three 
random seeds.

\begin{figure}[th]
    \centering
    \begin{subfigure}{0.49\textwidth}
        \centering
        \includegraphics[width=\linewidth]{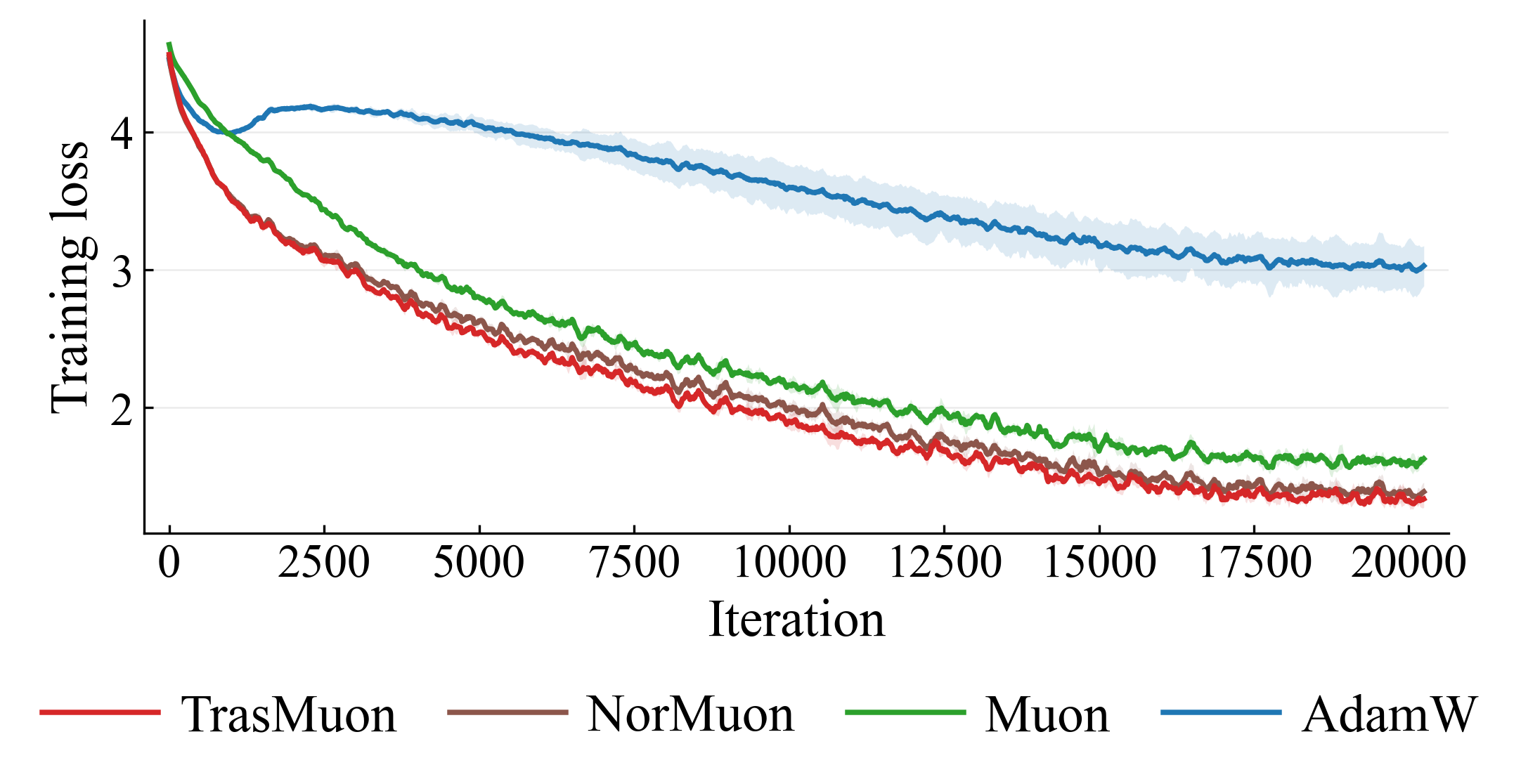}
        \caption{Training loss vs.\ iteration.}
        \label{fig:vit-imagenet100-loss}
    \end{subfigure}\hfill
    \begin{subfigure}{0.49\textwidth}
        \centering
        \includegraphics[width=\linewidth]{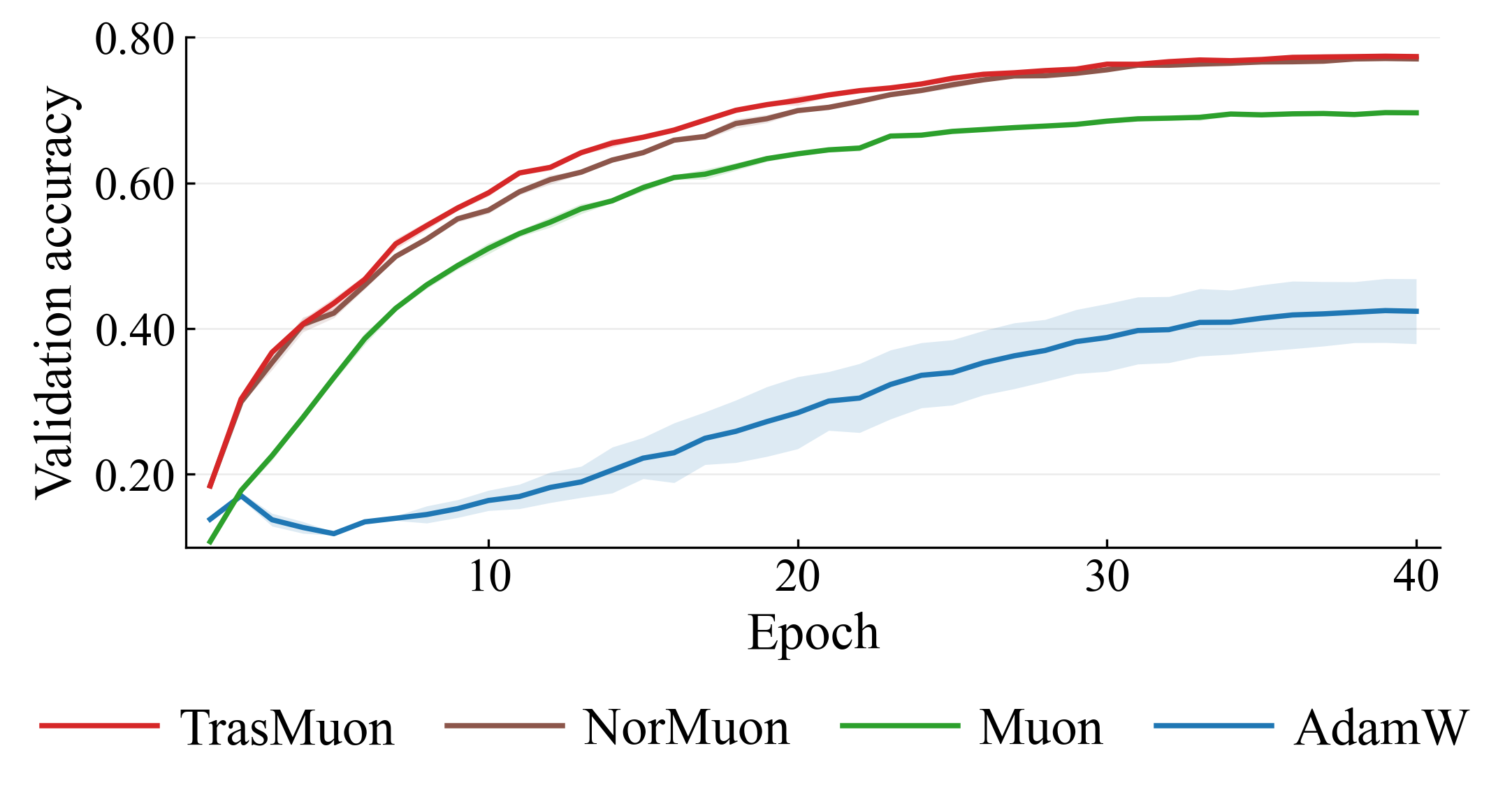}
        \caption{Validation top-1 accuracy vs.\ epoch.}
        \label{fig:vit-imagenet100-acc}
    \end{subfigure}
    \caption{ViT-Base training on ImageNet-100.
    Multi-seed results (mean $\pm$ std over three seeds: 42, 43, 44) for 4 optimizers.
    Shaded regions denote variability across seeds.}
\end{figure}

\noindent\textbf{Results.}
Across all optimizers evaluated with multi-seed runs, Muon, Normuon, and  \textsc{TrasMuon} consistently improve optimization behavior and validation accuracy over AdamW in these experiments, demonstrating the advantage of optimizers, which update based on structured, near-orthogonal update directions.
 illustrated in Fig.~\ref{fig:vit-imagenet100-loss} and Fig.~\ref{fig:vit-imagenet100-acc}, \textsc{TrasMuon} achieves the fastest loss reduction, the highest validation accuracy, and reduced variability across different seeds and various compared optimizers. 
 Moreover, CIFAR-100 has been evaluated robustness under controlled column-localized burst injection in Appendix~\ref{app:vit_cifar_burst}.


\subsection{PINNs Benchmark: ROI Sampling as a Nonstationary Stress Test}
\label{sec:pinns}

Adaptive collocation in physics-informed neural networks (PINNs) is often necessary to resolve localized errors and stiff PDE behavior, where uniform sampling under-resolves difficult regions~\cite{gao2023failure,subramanian2022adaptive,wu2023comprehensive}.
Here we use region-of-interest (ROI) densification as a \emph{controlled nonstationarity} mechanism to stress-test optimizer robustness: periodically concentrating interior collocation points in a small subregion induces distribution shifts in the residual samples, perturbing gradient statistics in a reproducible way.

\paragraph{Helmholtz equation setup.}
On $\Omega=[0,1]^2$, we consider
\begin{equation}
\Delta u(x) + \kappa^2 u(x) = f(x), \qquad u|_{\partial\Omega}=0,
\label{eq:helmholtz_pde_main}
\end{equation}
with the manufactured solution
\begin{equation}
u^\star(x,y)=\sin(\pi k x)\sin(\pi k y), \qquad \kappa=\pi k.
\label{eq:helmholtz_u_star_main}
\end{equation}
which yields $f(x)=-(\pi k)^2 u^\star(x)$.

We train an MLP $u_\theta:\Omega\to\mathbb{R}$ by minimizing
\begin{equation}
\mathcal{L}(\theta)=\mathbb{E}\left[\tfrac12 r_\theta(x)^2\right]
\;+\;\lambda_b\,\mathbb{E}\left[\tfrac12\big(u_\theta(x)-u^\star(x)\big)^2\right]
\label{eq:pinn_loss1}
\end{equation}
\begin{equation}
r_\theta(x)=\Delta u_\theta(x)+\kappa^2u_\theta(x)-f(x).
\label{eq:pinn_loss2}
\end{equation}
and report the relative error on a fixed evaluation grid
\begin{equation}
\mathrm{rel}\text{-}L_2(u_\theta,u^\star)=\frac{\|u_\theta-u^\star\|_{2}}{\|u^\star\|_{2}}.
\label{eq:rel_l2_main}
\end{equation}

\paragraph{ROI sampling protocol.}
To emulate adaptive densification, we use a time-varying interior sampling distribution
\begin{equation}
p_t(x) = (1-\alpha_t)\,p_0(x) + \alpha_t\,p_{\mathrm{roi}}(x),
\label{eq:mix_sampling_main}
\end{equation}
where $p_0$ is uniform over $\Omega$ and $p_{\mathrm{roi}}$ is uniform over $\Omega_{\mathrm{roi}}=[x_0,x_1]\times[y_0,y_1]$. 
We run $4000$ optimization steps and begin nonstationary ROI events after step $1000$.
At each ROI event (every $20$ steps), $95\%$ of interior points are sampled from a small ROI patch and the rest from the full domain. 
To avoid sensitivity to a single ROI location, the patch is drawn from a fixed pool (corners, edges, and interior patches) using a step-dependent hash for reproducibility.
This produces repeated, time-varying distribution shifts that mimic practical ROI refinement policies.

\begin{figure}[htb]
    \centering
    \begin{subfigure}{0.49\textwidth}
        \centering
        \includegraphics[width=\linewidth]{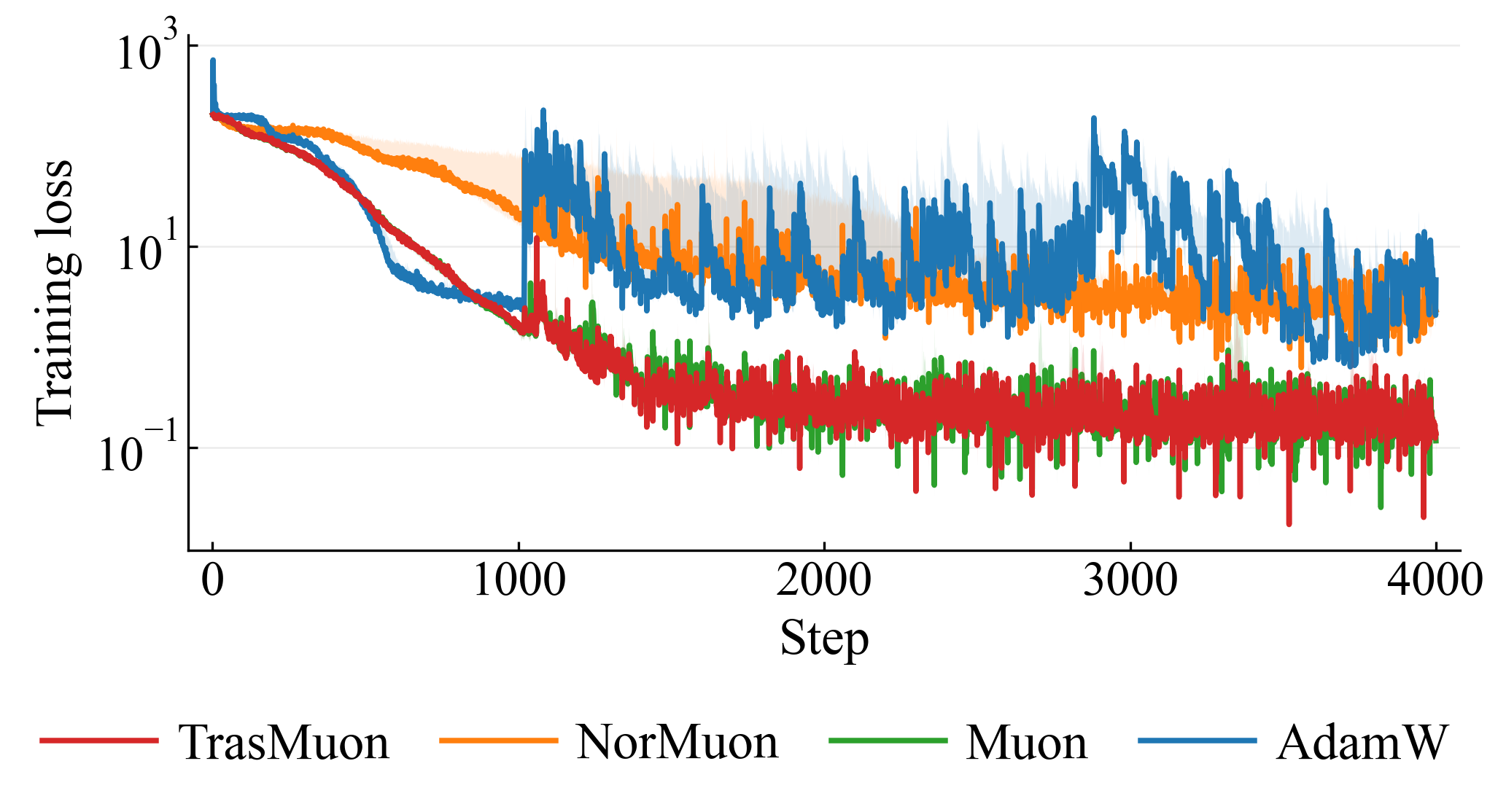}
        \caption{training objective (estimated on the time-varying sampling distribution $p_t$)}
        \label{fig:pinns_loss}
    \end{subfigure}\hfill
    \begin{subfigure}{0.49\textwidth}
        \centering
        \includegraphics[width=\linewidth]{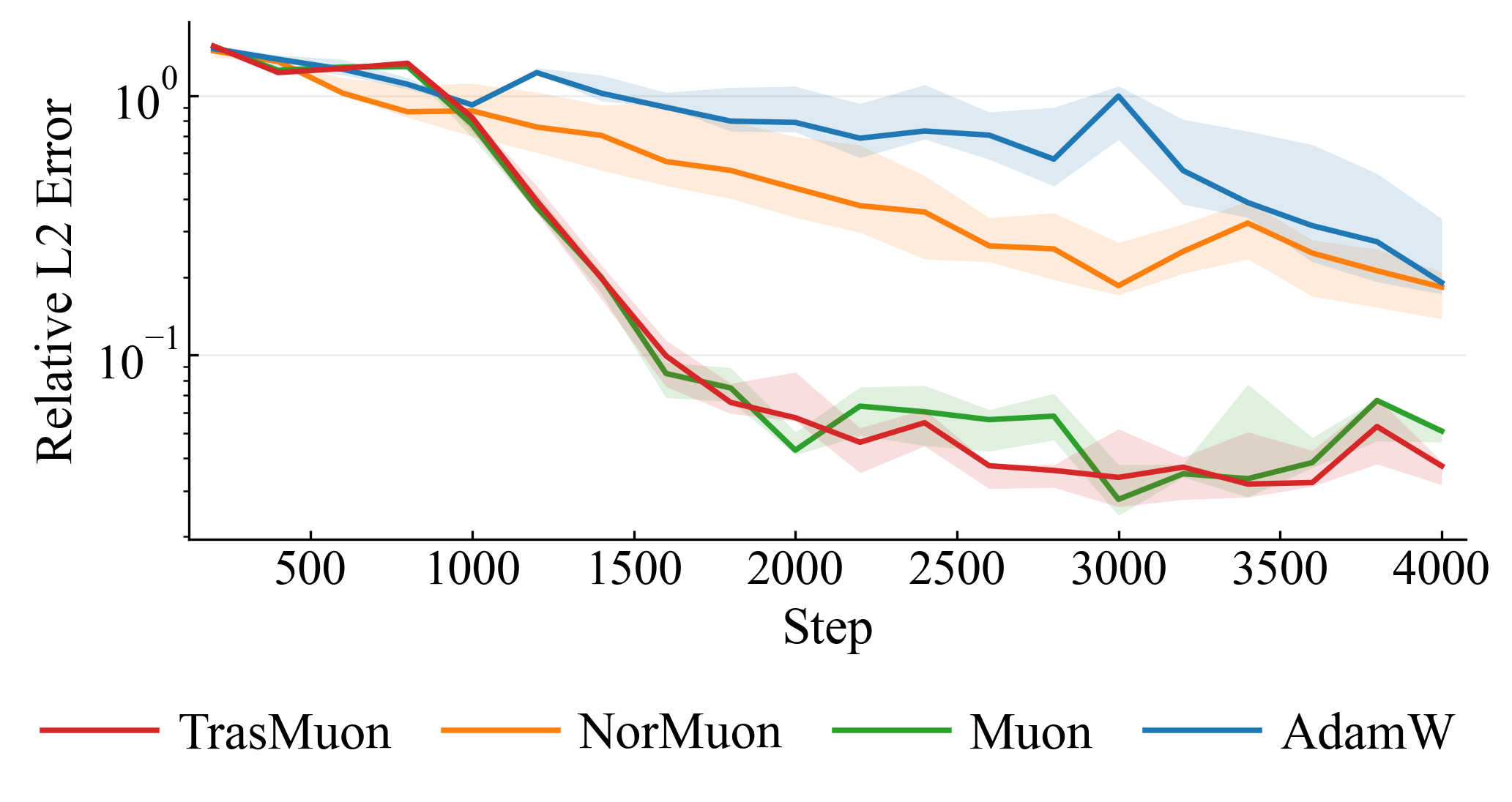}
        \caption{domain-wide relative $L_2$ error evaluated on a fixed grid. ROI events start at step $1000$ and repeat every $20$ steps}
        \label{fig:pinns_acc}
    \end{subfigure}
\caption{\textbf{PINN Helmholtz ($k{=}2$) under random ROI sampling shifts.} Curves show the mean over seeds, and shaded regions indicate variability across seeds.}
\end{figure}

\paragraph{PINN ROI-sampling stress test: convergence and robustness.}
Figure~\ref{fig:pinns_loss} and \ref{fig:pinns_acc} compare Muon and \textsc{TrasMuon} on Helmholtz ($k{=}2$) under a controlled nonstationary ROI-sampling protocol, where ROI events start at step $1000$ and recur every $20$ steps.
During the initial stationary phase (before ROI events), both methods exhibit nearly identical optimization trajectories in terms of training loss and domain-wide relative $L_2$ error, indicating that \textsc{TrasMuon} does not incur a measurable overhead or degradation under standard uniform sampling.

After ROI events begin, the training objective becomes significantly more variable due to the induced distribution shifts in interior collocation points.
In this nonstationary regime, \textsc{TrasMuon} maintains comparable or slightly lower training loss while exhibiting reduced extreme fluctuations, consistent with its design goal of suppressing bursty, feature-localized updates. These results support the conclusion that \textsc{TrasMuon} preserves baseline convergence under stationary sampling, while improving stability and final solution accuracy under controlled, nonstationary ROI sampling shifts.


\subsection{Mechanistic Study: Column-Localized Outliers and Energy-Based Feature Clipping}
\label{sec:toy2}

We design a controlled toy problem to validate the \emph{feature-wise clipping} mechanism in \textsc{TrasMuon} under intermittent, column-localized bursts.
We optimize a matrix parameter $W\in\mathbb{R}^{d\times d}$ under the quadratic objective
\begin{equation}
\min_W\; f(W) \;=\; \tfrac12 \|A W B - T\|_F^2,
\label{eq:toy2_obj}
\end{equation}
where $A=U\Sigma_AU^\top$ and $B=V\Sigma_BV^\top$ with random orthogonal $U,V$, and diagonal spectra $\Sigma_A,\Sigma_B$ chosen to yield a target condition number $\kappa\in\{10^2,10^4,10^6\}$.
This construction controls stiffness while allowing us to randomize nuisance rotations across runs.

\paragraph{Column-localized outlier injection (stress protocol).}
To emulate rare, feature-localized gradient domination, every $K_{\mathrm{out}}$ steps we inject an outlier event that amplifies a small subset of columns in a fixed feature basis.
Concretely, for momentum $M_t$ we select a set $\mathcal{J}$ of $s\ll d$ column indices and apply a multiplicative burst
\begin{equation}
\widetilde{M}_{t,\cdot j} \;=\;
\begin{cases}
a\,M_{t,\cdot j}, & \text{if } j\in\mathcal{J},\\[2pt]
M_{t,\cdot j}, & \text{otherwise},
\end{cases}
\label{eq:toy2_burst}
\end{equation}

with burst amplitude $a>1$.
This perturbation produces abrupt increases in column energy $E_{t,j}=\sum_i \widetilde{M}_{t,ij}^2$ while leaving the underlying objective \eqref{eq:toy2_obj} unchanged.

\paragraph{Preserving feature semantics.}
Because \textsc{TrasMuon}'s clipping is axis-aligned (column-wise), the stress protocol is evaluated under a \texttt{fix\_V=True} setting, i.e., the column basis is preserved across training and across injected events.
We additionally report a boundary condition where the column basis is randomized (\texttt{fix\_V=False}); in that case, injected energy disperses across columns and feature-wise clipping is not expected to yield an advantage (Appendix~\ref{app:toy2_supp}).

\paragraph{Metrics.}
We track (i) spike count and (ii) final objective value, reporting median and IQR over multiple seeds/rotations.

\paragraph{Closed-loop response.}
Figure~\ref{fig:toy2_loss_zoom} shows that \textsc{TrasMuon} reduces burst-induced loss spikes and improves convergence relative to the NorMuon backbone under matched compute.
Figure~\ref{fig:toy2_closed_loop} provides mechanism-level evidence consistent with a closed-loop response:
outlier events increase the relative column-energy ratio (e.g., $r_{q95}$/$r_{\max}$), which is immediately followed by a decrease in the \emph{applied} clipping signal (tracked by $c_{\mathrm{used,min}}$), thereby damping the burst and suppressing spikes.

\begin{figure}[t]
  \centering
  \includegraphics[width=\columnwidth]{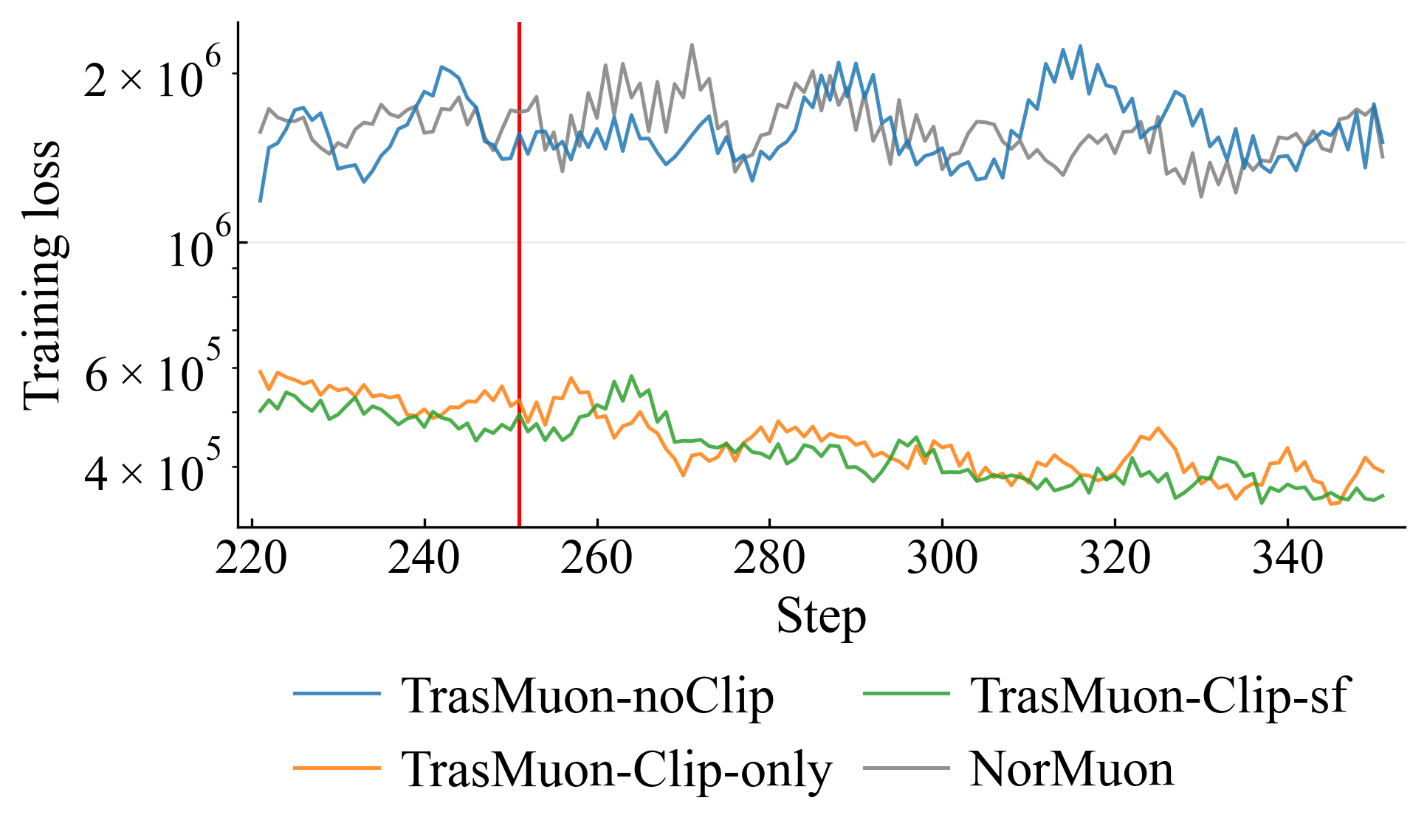}
  \caption{\textbf{Column outlier injection.}
  Loss trajectories in a window around an outlier event. Vertical markers indicate outlier steps.}
  \label{fig:toy2_loss_zoom}
\end{figure}

\begin{figure}[htb]
  \centering
  \includegraphics[width=0.9\columnwidth]{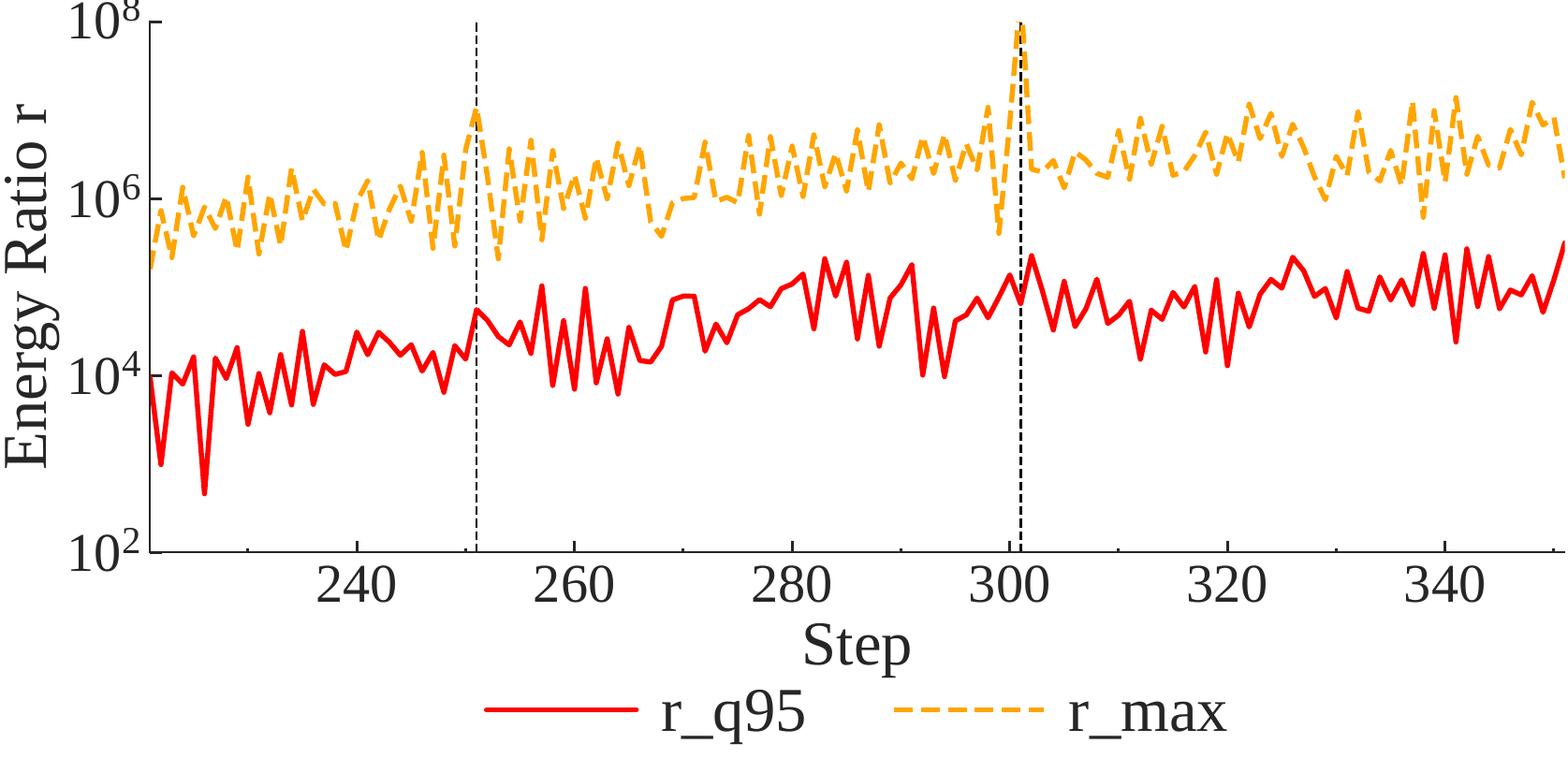}\\[0.10in]
  \includegraphics[width=0.9\columnwidth]{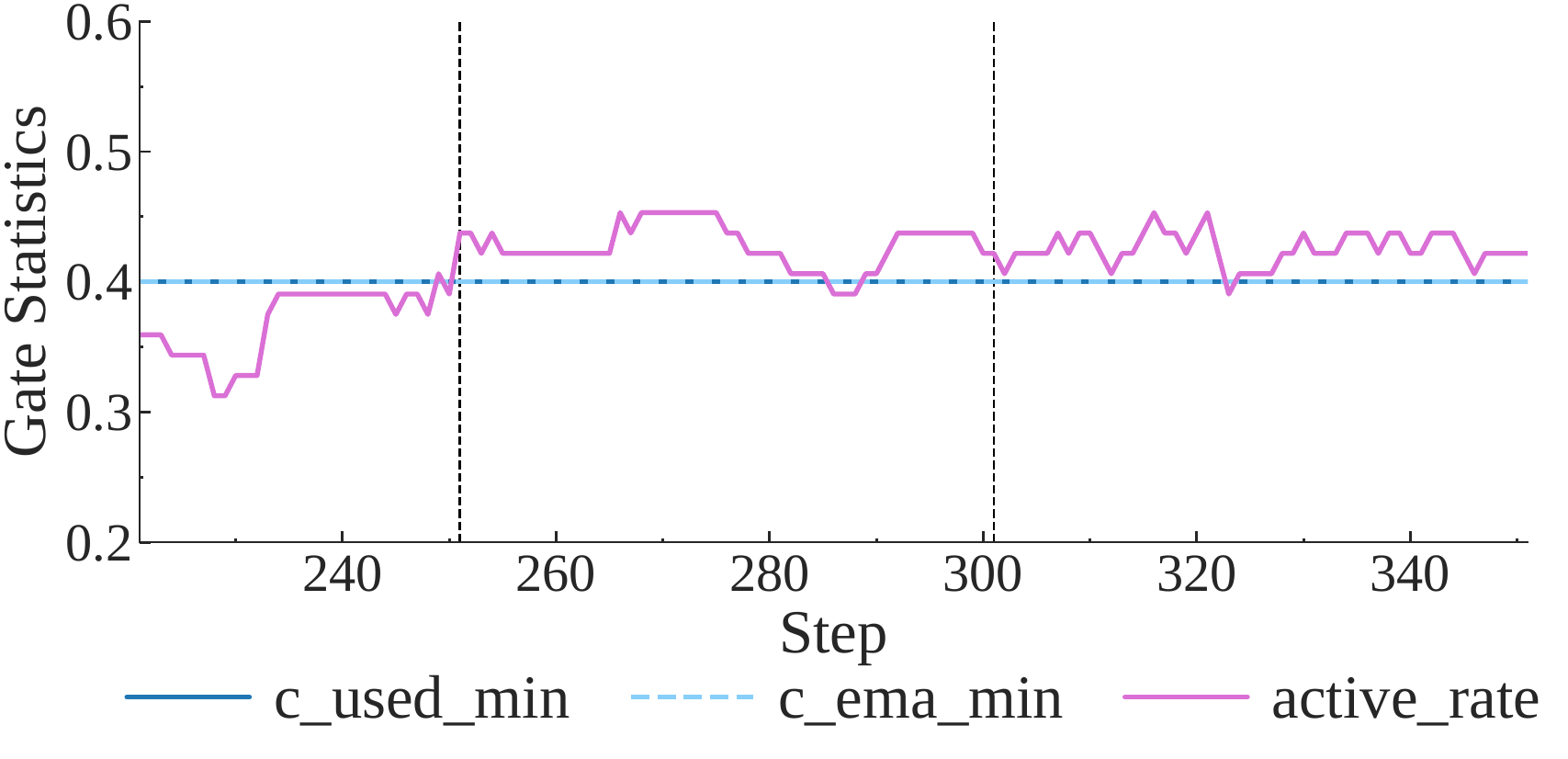}
  \caption{\textbf{Closed-loop clipping evidence.}
  Outlier events increase the column-energy ratio in log-scale (top), followed by stronger \emph{feature-wise clipping} in the applied coefficients (bottom; $c_{\mathrm{used,min}}$).}
  \label{fig:toy2_closed_loop}
\end{figure}

\paragraph{Not a trivial step-size reduction.}
To rule out the confound that improvements arise from a global effective step-size change, we include a \textsc{TrasMuon-noClip} ablation that disables feature-wise clipping while keeping all other components identical.
As summarized in Table~\ref{tab:toy2_fixV_true}, \textsc{TrasMuon-noClip} behaves similarly to NorMuon, whereas enabling clipping yields a clear reduction in spike statistics and a large improvement in the final objective, isolating the contribution of feature-wise clipping.

\begin{table}[t]
\centering
\caption{\textbf{Mechanistic study summary under \texttt{fix\_V=True}.}
\textsc{TrasMuon-noClip} removes feature-wise clipping while keeping the rest identical.
We report median with IQR across runs; lower is better.}
\label{tab:toy2_fixV_true}
\small
\begin{tabular}{lcc}
\toprule
Method & Spike Count & Final Loss \\
\midrule
NorMuon & 44 \tiny{(35,56)}  & 1.3e+06 \tiny{(1.0e+06,1.6e+06)} \\
\textsc{TrasMuon}-noClip & 48 \tiny{(38,56)} & 1.1e+06 \tiny{(8.9e+05,1.9e+06)} \\
\textsc{TrasMuon}-Clip-only & \textbf{28} \tiny{(24,34)} & 2.4e+05 \tiny{(2.0e+05,2.8e+05)} \\
\textsc{TrasMuon}-Clip-sf & 30 \tiny{(24,36)} & \textbf{2.0e+05} \tiny{(1.6e+05,2.7e+05)} \\
\bottomrule
\end{tabular}
\end{table}

\paragraph{Boundary condition (feature semantics broken).}
When the column basis is randomized, the advantage of feature-wise clipping diminishes (Appendix~\ref{app:toy2_supp}), consistent with the intended mechanism: axis-aligned clipping requires a meaningful feature basis.

\section{Discussion and Limitations}
\label{sec:discussion}

\paragraph{What \textsc{TrasMuon} changes.}
\textsc{TrasMuon} factorizes matrix updates into (i) a Muon-style near-isometric mixing factor constructed by Newton--Schulz orthogonalization and (ii) explicit magnitude controls: a global RMS-calibrated step size and a bounded, damping-only feature-wise coefficient $c_{t,j}\in[c_{\min},1]$.
This design targets a common practical tension: structured mixing can improve optimization geometry, while stable magnitudes govern learning-rate sensitivity and robustness to heavy-tailed bursts.
When feature axes are semantically meaningful and bursts are axis-localized, relative-energy damping selectively attenuates high-energy columns while largely preserving the Muon-style mixing structure.

\paragraph{When feature-wise damping and effective-time smoothing help.}
Feature-wise damping is most beneficial when update energy concentrates on a small subset of feature axes, reflected by large relative ratios $r_{t,j}=E_{t,j}/(E_t^{\mathrm{ref}}+\epsilon)$.
In this regime, multiplicative damping suppresses burst-dominated columns without amplification.
When clipping is recomputed sparsely (every $K$ steps) or schedules vary, effective-time (schedule-free) averaging provides a stable long-horizon estimate by accumulating $\gamma_t^2$-weighted statistics, reducing sensitivity to recomputation frequency and schedule details.


\paragraph{Limitations.}
(i) The formulation is most natural for 2D weight matrices; extending energy diagnostics and damping to embeddings and higher-order tensors requires careful axis conventions.
(ii) Newton--Schulz orthogonalization is sensitive to numerical precision; large-scale deployment benefits from precision-aware implementations.
(iii) \textsc{TrasMuon} introduces additional hyperparameters and design choices (e.g., $K,\alpha,k,c_{\min},\rho$ and the robust reference), and their interactions with model scale and data regimes merit broader sweeps.

\section{Conclusion}
\label{sec:conclusion}

We presented \textsc{TrasMuon}, a Muon-family optimizer that combines (i) NS-based near-isometric mixing factors with (ii) explicit magnitude stabilization via global RMS calibration and bounded, damping-only feature-wise clipping, optionally smoothed by effective-time (schedule-free) averaging.
Across the evaluated workloads, \textsc{TrasMuon} improves training stability and achieves competitive or better final performance than strong baselines.
Controlled diagnostics further support the intended mechanism: column-localized energy bursts increase relative energy ratios and lead to stronger applied damping, while ablations (e.g., \textsc{noClip}) and broken-axis settings help rule out trivial explanations such as uniform step-size reduction.
On practical tasks including language-model training, vision transformers, and PINNs under ROI-induced sampling shifts, \textsc{TrasMuon} yields faster or more stable optimization dynamics and improved robustness. 

Several extensions remain. Feature-wise damping is currently formulated for 2D matrices; extending it to embeddings and higher-order tensors requires principled axis conventions and efficient implementations.
NS orthogonalization adds compute and can be sensitive to numerical precision, motivating precision-aware and mixed-precision robust variants.
Finally, a fuller theory connecting orthogonalized mixing, axis-selective damping, and generalization remains open (Appendix~\ref{app:future_work}).


\section*{Impact Statement}

This paper presents work whose goal is to advance the field of machine learning.
There are many potential societal consequences of our work, none of which we feel must be specifically highlighted here.

\bibliography{Optimizer_trimmed}

\begin{thebibliography}{51}
\providecommand{\natexlab}[1]{#1}
\providecommand{\url}[1]{\texttt{#1}}
\expandafter\ifx\csname urlstyle\endcsname\relax
  \providecommand{\doi}[1]{doi: #1}\else
  \providecommand{\doi}{doi: \begingroup \urlstyle{rm}\Url}\fi

\bibitem[Ahn et~al.(2025)Ahn, Xu, Abreu, Fan, Magakyan, Sharma, Zhan, and Langford]{ahn2025dion}
Ahn, K., Xu, B., Abreu, N., Fan, Y., Magakyan, G., Sharma, P., Zhan, Z., and Langford, J.
\newblock Dion: Distributed orthonormalized updates.
\newblock \emph{arXiv preprint arXiv:2504.05295}, 2025.

\bibitem[Allouah et~al.(2025)Allouah, Guerraoui, Gupta, Jellouli, Rizk, and Stephan]{allouah2025adaptiveclippingFL}
Allouah, Y., Guerraoui, R., Gupta, N., Jellouli, A., Rizk, G., and Stephan, J.
\newblock Adaptive gradient clipping for robust federated learning.
\newblock In \emph{International Conference on Learning Representations}, 2025.

\bibitem[Ambityga(2021)]{ambityga2021imagenet100}
Ambityga.
\newblock {ImageNet100}.
\newblock \url{https://www.kaggle.com/datasets/ambityga/imagenet100}, 2021.
\newblock Accessed: 2026-01-23.

\bibitem[{Anthropic}(2026)]{IntroducingClaudeOpus}
{Anthropic}.
\newblock Introducing {Claude Opus} 4.5, 2026.
\newblock URL \url{https://www.anthropic.com/news/claude-opus-4-5}.
\newblock Accessed: 2026-01-12.

\bibitem[Behrouz et~al.(2025)Behrouz, Li, Kacham, Daliri, Deng, Zhong, Razaviyayn, and Mirrokni]{behrouzATLASLearningOptimally2025}
Behrouz, A., Li, Z., Kacham, P., Daliri, M., Deng, Y., Zhong, P., Razaviyayn, M., and Mirrokni, V.
\newblock {ATLAS}: Learning to optimally memorize the context at test time.
\newblock \emph{arXiv preprint arXiv:2505.23735}, 2025.

\bibitem[Bernstein(2025)]{bernstein2025deriving}
Bernstein, J.
\newblock Deriving {Muon}, 2025.
\newblock URL \url{https://jeremybernste.in/writing/deriving-muon}.

\bibitem[Bernstein \& Newhouse(2024)Bernstein and Newhouse]{bernsteinOldOptimizerNew2024a}
Bernstein, J. and Newhouse, L.
\newblock Old optimizer, new norm: An anthology.
\newblock In \emph{OPT 2024: Optimization for Machine Learning}, 2024.

\bibitem[Bernstein \& Newhouse(2025)Bernstein and Newhouse]{bernsteinModularDualityDeep2024a}
Bernstein, J. and Newhouse, L.
\newblock Modular duality in deep learning.
\newblock In \emph{International Conference on Machine Learning}, 2025.

\bibitem[Brock et~al.(2021)Brock, De, Smith, and Simonyan]{brock2021agc}
Brock, A., De, S., Smith, S.~L., and Simonyan, K.
\newblock High-performance large-scale image recognition without normalization.
\newblock In \emph{International Conference on Machine Learning}, 2021.

\bibitem[Conn et~al.(2000)Conn, Gould, and Toint]{conn2000trust}
Conn, A.~R., Gould, N. I.~M., and Toint, P.~L.
\newblock \emph{Trust Region Methods}.
\newblock SIAM, 2000.

\bibitem[{DeepSeek-AI} et~al.(2025){DeepSeek-AI}, Liu, Feng, Xue, Wang, et~al.]{deepseek-aiDeepSeekV3TechnicalReport2025}
{DeepSeek-AI}, Liu, A., Feng, B., Xue, B., Wang, B., et~al.
\newblock {DeepSeek-V3} technical report.
\newblock \emph{arXiv preprint arXiv:2412.19437}, 2025.

\bibitem[Defazio et~al.(2024)Defazio, Yang, Khaled, Mishchenko, Mehta, and Cutkosky]{defazio2024road}
Defazio, A., Yang, X.~A., Khaled, A., Mishchenko, K., Mehta, H., and Cutkosky, A.
\newblock The road less scheduled.
\newblock In \emph{Advances in Neural Information Processing Systems}, 2024.

\bibitem[Deng et~al.(2009)Deng, Dong, Socher, Li, Li, and Fei-Fei]{deng2009imagenet}
Deng, J., Dong, W., Socher, R., Li, L.-J., Li, K., and Fei-Fei, L.
\newblock Imagenet: A large-scale hierarchical image database.
\newblock In \emph{IEEE Conference on Computer Vision and Pattern Recognition}, 2009.

\bibitem[Dosovitskiy et~al.(2021)Dosovitskiy, Beyer, Kolesnikov, Weissenborn, Zhai, Unterthiner, Dehghani, Minderer, Heigold, Gelly, Uszkoreit, and Houlsby]{dosovitskiy2020vit}
Dosovitskiy, A., Beyer, L., Kolesnikov, A., Weissenborn, D., Zhai, X., Unterthiner, T., Dehghani, M., Minderer, M., Heigold, G., Gelly, S., Uszkoreit, J., and Houlsby, N.
\newblock An image is worth 16x16 words: Transformers for image recognition at scale.
\newblock In \emph{International Conference on Learning Representations}, 2021.

\bibitem[Duchi et~al.(2011)Duchi, Hazan, and Singer]{duchi2011adagrad}
Duchi, J., Hazan, E., and Singer, Y.
\newblock Adaptive subgradient methods for online learning and stochastic optimization.
\newblock \emph{Journal of Machine Learning Research}, 12:\penalty0 2121--2159, 2011.

\bibitem[Gao et~al.(2023)Gao, Yan, and Zhou]{gao2023failure}
Gao, Z., Yan, L., and Zhou, T.
\newblock Failure-informed adaptive sampling for {PINNs}.
\newblock \emph{SIAM Journal on Scientific Computing}, 45\penalty0 (4):\penalty0 A1971--A1994, 2023.

\bibitem[Gupta \& Wojtowytsch(2025)Gupta and Wojtowytsch]{gupta2024nesterov}
Gupta, K. and Wojtowytsch, S.
\newblock Nesterov acceleration in benignly non-convex landscapes.
\newblock In \emph{The International Conference on Learning Representations}, 2025.

\bibitem[Gupta et~al.(2018)Gupta, Koren, and Singer]{gupta2018shampoo}
Gupta, V., Koren, T., and Singer, Y.
\newblock Shampoo: Preconditioned stochastic tensor optimization.
\newblock In \emph{International Conference on Machine Learning}, 2018.

\bibitem[Hampel et~al.(1986)Hampel, Ronchetti, Rousseeuw, and Stahel]{hampel1986robust}
Hampel, F.~R., Ronchetti, E.~M., Rousseeuw, P.~J., and Stahel, W.~A.
\newblock \emph{Robust Statistics: The Approach Based on Influence Functions}.
\newblock John Wiley \& Sons, 1986.

\bibitem[Huber(1981)]{huber1981robust}
Huber, P.~J.
\newblock \emph{Robust Statistics}.
\newblock John Wiley \& Sons, 1981.

\bibitem[Izmailov et~al.(2018)Izmailov, Wilson, Podoprikhin, Vetrov, and Garipov]{izmailov2018swa}
Izmailov, P., Wilson, A., Podoprikhin, D., Vetrov, D., and Garipov, T.
\newblock Averaging weights leads to wider optima and better generalization.
\newblock In \emph{Conference on Uncertainty in Artificial Intelligence}, 2018.

\bibitem[Jordan et~al.(2024)Jordan, Jin, Boza, You, Cesista, Newhouse, and Bernstein]{jordan2024muon}
Jordan, K., Jin, Y., Boza, V., You, J., Cesista, F., Newhouse, L., and Bernstein, J.
\newblock {Muon}: An optimizer for hidden layers in neural networks, 2024.
\newblock URL \url{https://kellerjordan.github.io/posts/muon/}.

\bibitem[Khaled et~al.(2025)Khaled, Ozkara, Yu, Hong, and Park]{khaled2025muonbp}
Khaled, A., Ozkara, K., Yu, T., Hong, M., and Park, Y.
\newblock {MuonBP}: Faster {Muon} via block-periodic orthogonalization.
\newblock \emph{arXiv preprint arXiv:2510.16981}, 2025.

\bibitem[{Kimi Team} et~al.(2025){Kimi Team}, Bai, Bao, Chen, Chen, Chen, Chen, et~al.]{teamKimiK2Open2025a}
{Kimi Team}, Bai, Y., Bao, Y., Chen, G., Chen, J., Chen, N., Chen, R., et~al.
\newblock {Kimi K2}: Open agentic intelligence.
\newblock \emph{arXiv preprint arXiv:2507.20534}, 2025.

\bibitem[Kingma \& Ba(2017)Kingma and Ba]{kingma2017adammethodstochasticoptimization}
Kingma, D.~P. and Ba, J.
\newblock {Adam}: A method for stochastic optimization.
\newblock \emph{arXiv preprint arXiv:1412.6980}, 2017.

\bibitem[Kumar et~al.(2025)Kumar, Jin, and Quesnelle]{kumar2025curvadion}
Kumar, B., Jin, R., and Quesnelle, J.
\newblock {CurvaDion}: Curvature-adaptive distributed orthonormalization.
\newblock \emph{arXiv preprint arXiv:2512.13728}, 2025.

\bibitem[Large et~al.(2024)Large, Liu, Huh, Bahng, Isola, and Bernstein]{largeScalableOptimizationModular2024}
Large, T., Liu, Y., Huh, M., Bahng, H., Isola, P., and Bernstein, J.
\newblock Scalable optimization in the modular norm.
\newblock In \emph{Advances in Neural Information Processing Systems}, 2024.

\bibitem[Li et~al.(2025)Li, Liu, Liang, Chen, and Zhao]{liNorMuonMakingMuon2025a}
Li, Z., Liu, L., Liang, C., Chen, W., and Zhao, T.
\newblock {NorMuon}: Making {Muon} more efficient and scalable.
\newblock \emph{arXiv preprint arXiv:2510.05491}, 2025.

\bibitem[Loshchilov \& Hutter(2019)Loshchilov and Hutter]{loshchilov2019decoupledweightdecayregularization}
Loshchilov, I. and Hutter, F.
\newblock Decoupled weight decay regularization.
\newblock In \emph{International Conference on Learning Representations}, 2019.

\bibitem[Lozhkov et~al.(2024)Lozhkov, Ben~Allal, von Werra, and Wolf]{fineweb-edu}
Lozhkov, A., Ben~Allal, L., von Werra, L., and Wolf, T.
\newblock {FineWeb-Edu}: The finest collection of educational content, 2024.
\newblock URL \url{https://huggingface.co/datasets/HuggingFaceFW/fineweb-edu}.
\newblock Accessed: 2026-01-01.

\bibitem[Marfinetz(2025)]{marfinetzEvolvingDeepLearning2025}
Marfinetz, M.
\newblock Evolving {{Deep Learning Optimizers}}.
\newblock \emph{arXiv preprint arXiv:2512.11853}, 2025.

\bibitem[Martens \& Grosse(2015)Martens and Grosse]{martens2015kfac}
Martens, J. and Grosse, R.
\newblock Optimizing neural networks with kronecker-factored approximate curvature.
\newblock In \emph{International Conference on Machine Learning}, 2015.

\bibitem[{OpenAI}(2026)]{IntroducingGPT522026}
{OpenAI}.
\newblock Introducing {{GPT-5.2}}, 2026.
\newblock URL \url{https://openai.com/index/introducing-gpt-5-2/}.
\newblock Accessed: 2026-01-12.

\bibitem[Pagliardini et~al.(2025)Pagliardini, Ablin, and Grangier]{pagliardini2024ademamixoptimizerbetterfaster}
Pagliardini, M., Ablin, P., and Grangier, D.
\newblock The {AdEMAMix} optimizer: Better, faster, older.
\newblock In \emph{International Conference on Learning Representations}, 2025.

\bibitem[Park et~al.(2025)Park, Lee, Yoon, Hwang, and Kang]{parkOutlierSafePreTrainingRobust2025}
Park, J., Lee, T., Yoon, C., Hwang, H., and Kang, J.
\newblock Outlier-safe pre-training for robust {4-Bit} quantization of large language models.
\newblock \emph{arXiv preprint arXiv:2506.19697}, 2025.

\bibitem[Pascanu et~al.(2013)Pascanu, Mikolov, and Bengio]{pascanu2013difficulty}
Pascanu, R., Mikolov, T., and Bengio, Y.
\newblock On the difficulty of training recurrent neural networks.
\newblock In \emph{International Conference on Machine Learning}, 2013.

\bibitem[Pethick et~al.(2025)Pethick, Xie, Antonakopoulos, Zhu, Silveti-Falls, and Cevher]{pethick2025training}
Pethick, T., Xie, W., Antonakopoulos, K., Zhu, Z., Silveti-Falls, A., and Cevher, V.
\newblock Training deep learning models with norm-constrained {LMO}s.
\newblock In \emph{International Conference on Machine Learning}, 2025.

\bibitem[Polyak \& Juditsky(1992)Polyak and Juditsky]{polyak1992averaging}
Polyak, B.~T. and Juditsky, A.~B.
\newblock Acceleration of stochastic approximation by averaging.
\newblock \emph{SIAM Journal on Control and Optimization}, 1992.

\bibitem[Radford et~al.(2019)Radford, Wu, Child, Luan, Amodei, Sutskever, et~al.]{gpt2}
Radford, A., Wu, J., Child, R., Luan, D., Amodei, D., Sutskever, I., et~al.
\newblock Language models are unsupervised multitask learners.
\newblock \emph{OpenAI blog}, 2019.

\bibitem[Riabinin et~al.(2025)Riabinin, Shulgin, Gruntkowska, and Richt{\'a}rik]{riabinin2025gluon}
Riabinin, A., Shulgin, E., Gruntkowska, K., and Richt{\'a}rik, P.
\newblock Gluon: Making {Muon} \& {Scion} great again! (bridging theory and practice of {LMO}-based optimizers for {LLMs}).
\newblock \emph{arXiv preprint arXiv:2505.13416}, 2025.

\bibitem[Ruppert(1988)]{ruppert1988rm}
Ruppert, D.
\newblock Efficient estimations from a slowly convergent robbins--monro process.
\newblock Technical report, Cornell University, ORIE, 1988.

\bibitem[Shao et~al.(2025)Shao, Weng, Sun, Gao, Zhang, Mao, Xu, Zhang, and Xing]{shao2025bds}
Shao, Y., Weng, S., Sun, H., Gao, Q., Zhang, L., Mao, Z., Xu, S., Zhang, Z., and Xing, L.
\newblock {BDS-Adam} optimizer integrating adaptive variance rectification with semi-adaptive gradient smoothing.
\newblock \emph{Scientific Reports}, 15\penalty0 (1):\penalty0 36906, 2025.

\bibitem[Shazeer \& Stern(2018)Shazeer and Stern]{shazeer2018adafactor}
Shazeer, N. and Stern, M.
\newblock Adafactor: Adaptive learning rates with sublinear memory cost.
\newblock In \emph{International Conference on Machine Learning}, 2018.

\bibitem[Subramanian et~al.(2022)Subramanian, Kirby, Mahoney, and Gholami]{subramanian2022adaptive}
Subramanian, S., Kirby, R.~M., Mahoney, M.~W., and Gholami, A.
\newblock Adaptive self-supervision algorithms for physics-informed neural networks.
\newblock \emph{arXiv preprint arXiv:2207.04084}, 2022.

\bibitem[Touvron et~al.(2021)Touvron, Cord, Douze, Massa, Sablayrolles, and J{\'e}gou]{touvron2021training}
Touvron, H., Cord, M., Douze, M., Massa, F., Sablayrolles, A., and J{\'e}gou, H.
\newblock Training data-efficient image transformers \& distillation through attention.
\newblock In \emph{International Conference on Machine Learning}, 2021.

\bibitem[Wu et~al.(2023)Wu, Zhu, Tan, Kartha, and Lu]{wu2023comprehensive}
Wu, C., Zhu, M., Tan, Q., Kartha, Y., and Lu, L.
\newblock A comprehensive study of non-adaptive and residual-based adaptive sampling for physics-informed neural networks.
\newblock \emph{Computer Methods in Applied Mechanics and Engineering}, 403:\penalty0 115671, 2023.

\bibitem[Yang et~al.(2025)Yang, Li, Yang, Zhang, Hui, Zheng, Yu, Gao, Huang, Lv, et~al.]{yangQwen3TechnicalReport2025}
Yang, A., Li, A., Yang, B., Zhang, B., Hui, B., Zheng, B., Yu, B., Gao, C., Huang, C., Lv, C., et~al.
\newblock {Qwen3} technical report.
\newblock \emph{arXiv preprint arXiv:2505.09388}, 2025.

\bibitem[You et~al.(2017)You, Gitman, and Ginsburg]{you2017lars}
You, Y., Gitman, I., and Ginsburg, B.
\newblock Large batch training of convolutional networks.
\newblock \emph{arXiv preprint arXiv:1708.03888}, 2017.

\bibitem[You et~al.(2020)You, Li, Reddi, Hseu, Kumar, Bhojanapalli, Song, Demmel, Keutzer, and Hsieh]{you2019lamb}
You, Y., Li, J., Reddi, S., Hseu, J., Kumar, S., Bhojanapalli, S., Song, X., Demmel, J., Keutzer, K., and Hsieh, C.-J.
\newblock Large batch optimization for deep learning: Training bert in 76 minutes.
\newblock In \emph{International Conference on Learning Representations}, 2020.

\bibitem[Yuan et~al.(2025)Yuan, Liu, Wu, Zhou, and Gu]{yuan2024mars}
Yuan, H., Liu, Y., Wu, S., Zhou, X., and Gu, Q.
\newblock {MARS}: Unleashing the power of variance reduction for training large models.
\newblock In \emph{International Conference on Machine Learning}, 2025.

\bibitem[Zhang et~al.(2019)Zhang, Lucas, Ba, and Hinton]{zhang2019lookahead}
Zhang, M., Lucas, J., Ba, J., and Hinton, G.~E.
\newblock Lookahead optimizer: k steps forward, 1 step back.
\newblock \emph{Advances in Neural Information Processing Systems}, 2019.

\end{thebibliography}


\begin{thebibliography}{8}
\providecommand{\natexlab}[1]{#1}
\providecommand{\url}[1]{\texttt{#1}}
\expandafter\ifx\csname urlstyle\endcsname\relax
  \providecommand{\doi}[1]{doi: #1}\else
  \providecommand{\doi}{doi: \begingroup \urlstyle{rm}\Url}\fi

\bibitem[Author(2021)]{anonymous}
Author, N.~N.
\newblock Suppressed for anonymity, 2021.

\bibitem[Duda et~al.(2000)Duda, Hart, and Stork]{DudaHart2nd}
Duda, R.~O., Hart, P.~E., and Stork, D.~G.
\newblock \emph{Pattern Classification}.
\newblock John Wiley and Sons, 2nd edition, 2000.

\bibitem[Kearns(1989)]{kearns89}
Kearns, M.~J.
\newblock \emph{Computational Complexity of Machine Learning}.
\newblock PhD thesis, Department of Computer Science, Harvard University, 1989.

\bibitem[Langley(2000)]{langley00}
Langley, P.
\newblock Crafting papers on machine learning.
\newblock In Langley, P. (ed.), \emph{Proceedings of the 17th International Conference on Machine Learning (ICML 2000)}, pp.\  1207--1216, Stanford, CA, 2000. Morgan Kaufmann.

\bibitem[Michalski et~al.(1983)Michalski, Carbonell, and Mitchell]{MachineLearningI}
Michalski, R.~S., Carbonell, J.~G., and Mitchell, T.~M. (eds.).
\newblock \emph{Machine Learning: An Artificial Intelligence Approach, Vol. I}.
\newblock Tioga, Palo Alto, CA, 1983.

\bibitem[Mitchell(1980)]{mitchell80}
Mitchell, T.~M.
\newblock The need for biases in learning generalizations.
\newblock Technical report, Computer Science Department, Rutgers University, New Brunswick, MA, 1980.

\bibitem[Newell \& Rosenbloom(1981)Newell and Rosenbloom]{Newell81}
Newell, A. and Rosenbloom, P.~S.
\newblock Mechanisms of skill acquisition and the law of practice.
\newblock In Anderson, J.~R. (ed.), \emph{Cognitive Skills and Their Acquisition}, chapter~1, pp.\  1--51. Lawrence Erlbaum Associates, Inc., Hillsdale, NJ, 1981.

\bibitem[Samuel(1959)]{Samuel59}
Samuel, A.~L.
\newblock Some studies in machine learning using the game of checkers.
\newblock \emph{IBM Journal of Research and Development}, 3\penalty0 (3):\penalty0 211--229, 1959.

\end{thebibliography}
\bibliographystyle{icml2026}

\newpage
\appendix
\onecolumn

\section{Future Work}
\label{app:future_work}

\begin{itemize}
  \item \textbf{Theory and guarantees.}
  Develop stability and convergence analyses for orthogonalized updates combined with bounded, damping-only feature-wise clipping.
  A promising direction is to formalize a Lyapunov-style descent argument under explicit alignment conditions and bounded-update properties, and to connect empirical spectral diagnostics (e.g., the effective update statistic $A_{\mathrm{eff}}=W^\top\Delta W$) to provable stability regimes.

  \item \textbf{Generalizing feature axes beyond 2D matrices.}
  Extend EnergyCol-style clipping to convolutional kernels, embeddings, and higher-order tensors by defining principled ``feature axes'' (e.g., input-channel, attention-head, or group dimensions).
  We also plan to explore block-wise and low-rank variants that preserve interpretability while reducing per-step overhead.

  \item \textbf{Adaptive burst modeling with transparent control.}
  Replace fixed clipping hyperparameters (e.g., $c_{\min}$, $\alpha$, update period $K$) with lightweight, interpretable adaptations driven by online tail statistics of the energy distribution (quantiles, kurtosis, robust outlier scores), while maintaining the damping-only constraint and avoiding hidden amplification.

  \item \textbf{Systems and numerical precision.}
  Improve the efficiency and robustness of Newton--Schulz orthogonalization in mixed precision and distributed settings.
  This includes precision-aware kernels, communication-efficient implementations, and amortized/approximate orthogonalization strategies that retain most of the directional benefit at lower cost.

  \item \textbf{Broader robustness regimes and downstream impact.}
  Evaluate \textsc{TrasMuon} under realistic forms of nonstationarity common in large-scale training (curriculum shifts, domain-mixture changes, sequence-length spikes, and data-quality transitions), and study how clipping statistics correlate with downstream robustness and generalization.
\end{itemize}

\section{TrasMuon Algorithm (Extended)}
\label{app:methodology}

We present \textbf{TrasMuon} (\textbf{T}rust \textbf{R}egion \textbf{A}daptive \textbf{S}caling for \textbf{Muon}),
which explicitly decouples \emph{update geometry} (direction) from \emph{step-size control} (magnitude).
For a matrix parameter $W\in\mathbb{R}^{d_{\mathrm{out}}\times d_{\mathrm{in}}}$ with stochastic gradient
$G_t=\nabla_W\mathcal{L}(W_t)$, TrasMuon applies multiplicative coupling
\begin{equation}
\Delta W_t \;=\; -\,\hat{\eta}_t \, O_t^{\mathrm{base}} \,\mathrm{diag}(c_t),
\label{eq:trasmuon_update}
\end{equation}
where $O_t^{\mathrm{base}}\in\mathbb{R}^{d_{\mathrm{out}}\times d_{\mathrm{in}}}$ provides a structured direction,
$\hat{\eta}_t$ is a row-wise RMS-calibrated global step size, and
$c_t\in[c_{\min},1]^{d_{\mathrm{in}}}$ is a \emph{feature-axis} (column-wise) damping vector (damping-only, no amplification).

\subsection{Orthogonalized Directions via Newton--Schulz}
\label{sec:direction}
TrasMuon maintains an exponential moving average of gradients
\begin{equation}
M_t \;=\; \beta_1 M_{t-1} + (1-\beta_1)G_t .
\label{eq:momentum}
\end{equation}
To extract a rotation-robust, near-isometric direction, we approximate the polar factor of $M_t$.
For numerical stability of Newton--Schulz (NS) iterations, we remove the scale gauge by RMS-normalizing
\begin{equation}
\tilde{M}_t \;=\; \frac{M_t}{\|M_t\|_F/\sqrt{d_{\mathrm{out}}d_{\mathrm{in}}}+\epsilon},
\label{eq:ns_prenorm}
\end{equation}
and apply $T$ NS steps to obtain
\begin{equation}
O_t \;\approx\; \mathrm{NS}(\tilde{M}_t;T) \;\approx\; \tilde{M}_t(\tilde{M}_t^\top \tilde{M}_t)^{-1/2},
\label{eq:ns_polar}
\end{equation}
yielding a structured direction that is less sensitive to axis rotations than elementwise or diagonal preconditioning
(e.g., Muon-style orthogonalized updates).

\subsection{Row-Second-Moment Scaling and RMS-Calibrated Step Size}
\label{sec:rms}

Orthogonalization primarily shapes \emph{direction}. To stabilize \emph{magnitude} across layers and time, we apply
lightweight row-wise second-moment scaling (as in NorMuon~\cite{liNorMuonMakingMuon2025a}):
\begin{equation}
v_t^{\mathrm{row}} \;=\; \beta_2 v_{t-1}^{\mathrm{row}} + (1-\beta_2)\,\mathrm{mean}_{j}\!\left(O_{t,\cdot j}^{\odot 2}\right),
\label{eq:row_scale1}
\end{equation}
\begin{equation}
O_t^{\mathrm{base}} \;=\; \mathrm{diag}\!\big((v_t^{\mathrm{row}}+\epsilon)^{-1/2}\big)\,O_t .
\label{eq:row_scale2}
\end{equation}
Row scaling addresses row-wise heterogeneity, while a \emph{global} row-wise calibration controls the update norm.
We set
\begin{equation}
\hat{\eta}_t
\;=\;
\eta \cdot \frac{\sqrt{d_{\mathrm{out}}d_{\mathrm{in}}}}{\|O_t^{\mathrm{base}}\|_F+\epsilon},
\label{eq:rms_cal}
\end{equation}
so that the per-step RMS magnitude of $\Delta W_t$ is on the order of $\eta$.
Since $c_t\le \mathbf{1}$ elementwise, \eqref{eq:rms_cal} also implies an explicit Frobenius-norm bound
$\|\Delta W_t\|_F \le \eta\sqrt{d_{\mathrm{out}}d_{\mathrm{in}}}$ (up to $\epsilon$), reducing sensitivity to layer shape
and fluctuations in the orthogonalized direction~\cite{bernsteinModularDualityDeep2024a,largeScalableOptimizationModular2024}.

\subsection{Energy-Based Feature Clipping}
\label{sec:energy_gate}

\paragraph{Motivation.}
In practice, instability often arises from \emph{bursty magnitudes} that concentrate on a small subset of feature axes (columns),
causing loss spikes and narrowing the stable learning-rate region.
TrasMuon therefore introduces \emph{feature-wise clipping}: it dampens only the high-energy feature directions while preserving the Muon-like direction structure in $O_t^{\mathrm{base}}$.

\paragraph{Column energy and a robust reference.}
We measure column energy from momentum $M_t$:
\begin{equation}
E_{t,j} \;=\; \sum_{i=1}^{d_{\mathrm{out}}} M_{t,ij}^2,
\qquad j=1,\dots,d_{\mathrm{in}}.
\label{eq:col_energy}
\end{equation}
We summarize the typical energy level at step $t$ by a quantile statistic
\begin{equation}
E_t^{\mathrm{cur}}
\;=\;
\mathrm{Quantile}_{q}\!\big(\{E_{t,j}\}_{j=1}^{d_{\mathrm{in}}}\big),
\qquad q=0.5,
\label{eq:ecur_quantile}
\end{equation}
and maintain a running reference via an EMA updated every step:
\begin{equation}
E_t^{\mathrm{ref}}
\;=\;
\beta_E E_{t-1}^{\mathrm{ref}} + (1-\beta_E)\,E_t^{\mathrm{cur}}.
\label{eq:eref}
\end{equation}
Using a quantile (median) yields a high-breakdown reference: the sample median has a $50\%$ breakdown point,
so a sparse set of high-energy columns cannot arbitrarily inflate $E_t^{\mathrm{ref}}$ and thereby ``move the clipping threshold''
in response to the outliers being clipped~\cite{hampel1986robust,huber1981robust}.

\paragraph{Relative ratio and clipping-style damping.}
We form a dimensionless ratio
\begin{equation}
r_{t,j} \;=\; \frac{E_{t,j}}{E_t^{\mathrm{ref}}+\epsilon}.
\label{eq:ratio}
\end{equation}
A hard energy cap $E_{t,j}\le k\,E_t^{\mathrm{ref}}$ gives the column-wise analogue of norm clipping:
\begin{equation}
c_{t,j}^{\mathrm{hard}}
\;=\;
\min\!\left(1,\sqrt{\frac{k\,E_t^{\mathrm{ref}}}{E_{t,j}+\epsilon}}\right),
\label{eq:hard_col_clip}
\end{equation}
which enforces $r_{t,j}\le k$ after rescaling.
In TrasMuon we use a smooth, numerically stable \emph{soft clipping} rule:
\begin{equation}
c_{t,j}^{\mathrm{raw}}
\;=\;
\frac{1}{1+\alpha\log(1+r_{t,j})},
\qquad
c_{t,j}^{\mathrm{gate}}
\;=\;
\mathrm{clip}\!\left(c_{t,j}^{\mathrm{raw}},\,c_{\min},\,1\right).
\label{eq:soft_clip}
\end{equation}
This rule is bounded and avoids power-law instabilities as $r_{t,j}\!\to\!0$.
Importantly, $c_{t,j}\le 1$ for all $j$, so the mechanism is strictly damping-only and can be interpreted as a trust-region safety mechanism in \emph{feature space}.

\paragraph{Triggered vs.\ continuous clipping.}
Optionally, we apply damping only when $r_{t,j}$ exceeds a trigger $k$:
\begin{equation}
\bar{c}_{t,j}
\;=\;
\begin{cases}
c_{t,j}^{\mathrm{gate}}, & r_{t,j} > k,\\
1, & \text{otherwise},
\end{cases}
\label{eq:trigger_gate}
\end{equation}
so that non-burst columns remain unchanged and the gate acts as an event-driven clip.

\subsection{Temporal Smoothing and Schedule-Free Averaging}
\label{sec:sf}

To reduce short-term noise and avoid sensitivity to the gate-update period, we smooth the instantaneous clip.
Let $c_t^{\mathrm{inst}}$ denote the clip applied at step $t$ (either $c_t^{\mathrm{gate}}$ or $\bar{c}_t$ depending on triggering).
We first apply EMA smoothing:
\begin{equation}
c_t^{\mathrm{ema}}
\;=\;
\beta_c c_{t-1}^{\mathrm{ema}} + (1-\beta_c)c_t^{\mathrm{inst}}.
\label{eq:gate_ema}
\end{equation}
Second, we maintain a schedule-free average using an effective step weight $\gamma_t$ (default $\gamma_t=\eta$)~\cite{defazio2024road}.
Define the scalar accumulator $S_t\in\mathbb{R}$ and vector accumulator $C_t\in\mathbb{R}^{d_{\mathrm{in}}}$:
\begin{equation}
S_t \;=\; S_{t-1}+\gamma_t^2,
\qquad
C_t \;=\; C_{t-1}+\gamma_t^2 c_{t-1}^{\mathrm{last}},
\qquad
c_t^{\mathrm{avg}} \;=\; \frac{C_t}{S_t+\epsilon},
\label{eq:sf_gate}
\end{equation}
where $c_{t-1}^{\mathrm{last}}$ is the most recently applied clip (cached between updates).
We then mix short- and long-term estimates:
\begin{equation}
c_t
\;=\;
(1-\rho)c_t^{\mathrm{ema}} + \rho c_t^{\mathrm{avg}},
\qquad
c_t^{\mathrm{last}} \leftarrow c_t.
\label{eq:mix_gate}
\end{equation}
This effective-time averaging reduces sensitivity to warmup length and total training steps, and prevents bias when the raw clip is computed only every $K$ steps.

\subsection{Final TrasMuon update}
Substituting the components into \eqref{eq:trasmuon_update} yields the final update
\begin{equation}
\Delta W_t
\;=\;
-\hat{\eta}_t\, O_t^{\mathrm{base}}\,\mathrm{diag}(c_t),
\label{eq:final_update_gate}
\end{equation}
which preserves Muon-like directional geometry via $O_t^{\mathrm{base}}$ while controlling bursty magnitudes along feature axes through damping-only feature clipping $c_t$.

\section{Convergence Analysis (Extended)}
\label{app:convergence}

\paragraph{Scope.}We provide a convergence \emph{framework} for TrasMuon/energy clipping updates,
separating (i) unconditional algebraic properties (bounded update norm; damping-only contraction)
from (ii) mild alignment assumptions connecting the structured update to descent.

\subsection{Algorithmic abstraction}
Let $f:\mathbb{R}^{m\times n}\to\mathbb{R}$ and let $G_t$ be a stochastic gradient at $W_t$.
We study updates of the form
\begin{equation}
W_{t+1} \;=\; (1-\eta\lambda)\,W_t \;+\; \Delta W_t,
\qquad
\Delta W_t \;:=\; -\hat{\eta}_t U_t,
\qquad
U_t := O_t^{\mathrm{base}}\mathrm{diag}(c_t),
\label{eq:unified_update_app}
\end{equation}
with RMS-calibrated step size
\begin{equation}
\hat{\eta}_t \;=\; \eta\cdot\frac{\sqrt{mn}}{\|O_t^{\mathrm{base}}\|_F+\epsilon},
\label{eq:rms_eta_app}
\end{equation}
and damping-only clipping
\begin{equation}
0<c_{\min}\le c_{t,j}\le 1 \quad \forall j,t.
\label{eq:damping_only_app}
\end{equation}

\subsection{Deterministic algebraic properties}

\begin{lemma}[Damping-only contraction]
\label{lem:damping_contraction_app}
For any $A\in\mathbb{R}^{m\times n}$ and any $c\in[0,1]^n$,
$\|A\,\mathrm{diag}(c)\|_F \le \|A\|_F$.
\end{lemma}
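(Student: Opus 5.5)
The plan is to compute the Frobenius norm column by column, since right-multiplication by $\mathrm{diag}(c)$ acts independently on each column. Write $A_{\cdot j}$ for the $j$-th column of $A$. Then the $j$-th column of $A\,\mathrm{diag}(c)$ is exactly $c_j A_{\cdot j}$, because $(A\,\mathrm{diag}(c))_{ij}=\sum_k A_{ik}\,\mathrm{diag}(c)_{kj}=A_{ij}c_j$.

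Using the column decomposition of the squared Frobenius norm, we obtain
\begin{equation*}
\|A\,\mathrm{diag}(c)\|_F^2=\sum_{j=1}^n \|c_jA_{\cdot j}\|_2^2=\sum_{j=1}^n c_j^2\,\|A_{\cdot j}\|_2^2 .
\end{equation*}
The hypothesis $c\in[0,1]^n$ gives $0\le c_j^2\le 1$ for every $j$. Each term is therefore bounded by $\|A_{\cdot j}\|_2^2$, and summing over $j$ yields $\|A\,\mathrm{diag}(c)\|_F^2\le\sum_j\|A_{\cdot j}\|_2^2=\|A\|_F^2$. Taking nonnegative square roots gives the claim.

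There is no real obstacle here. The only point needing care is to use $c_j\ge 0$ together with $c_j\le 1$ to get $c_j^2\le 1$; the upper bound alone is not enough for possibly negative entries. An alternative route is the inequality $\|AB\|_F\le\|A\|_F\|B\|_2$ with $\|\mathrm{diag}(c)\|_2=\max_j|c_j|\le 1$, but the entrywise computation above is more self-contained and is the one I will present.
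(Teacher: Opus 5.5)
Your proposal is correct and is essentially the paper's own proof: both expand $\|A\,\mathrm{diag}(c)\|_F^2=\sum_{j} c_j^2\|A_{\cdot j}\|_2^2$ column by column and bound each term by $\|A_{\cdot j}\|_2^2$ using $c_j^2\le 1$. You also note explicitly that $c_j\ge 0$, i.e.\ $|c_j|\le 1$, is what gives $c_j^2\le 1$; this is correct, and the paper uses it without comment.
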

\begin{proof}
$\|A\,\mathrm{diag}(c)\|_F^2=\sum_{j=1}^n c_j^2\|A_{\cdot j}\|_2^2\le\sum_{j=1}^n\|A_{\cdot j}\|_2^2=\|A\|_F^2$.
\end{proof}

\begin{lemma}[Row-wise RMS calibration]
\label{lem:tr_bound_app}
Under \eqref{eq:unified_update_app}--\eqref{eq:damping_only_app},
for all $t$,
\begin{equation}
\|\Delta W_t\|_F \le \eta\sqrt{mn}.
\label{eq:tr_bound_app}
\end{equation}
\end{lemma}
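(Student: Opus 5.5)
The plan is to peel the update apart into its scalar step size and its matrix factor, and bound each in turn. By \eqref{eq:unified_update_app}, $\Delta W_t=-\hat{\eta}_t\,O_t^{\mathrm{base}}\mathrm{diag}(c_t)$, so
\[
\|\Delta W_t\|_F=\hat{\eta}_t\,\|O_t^{\mathrm{base}}\mathrm{diag}(c_t)\|_F ,
\]
since $\hat{\eta}_t\ge 0$ by \eqref{eq:rms_eta_app}.

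The first step removes the damping factor. By \eqref{eq:damping_only_app}, $c_t\in[c_{\min},1]^n\subset[0,1]^n$. Lemma~\ref{lem:damping_contraction_app} with $A=O_t^{\mathrm{base}}$ then gives $\|O_t^{\mathrm{base}}\mathrm{diag}(c_t)\|_F\le\|O_t^{\mathrm{base}}\|_F$.

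The second step substitutes the calibrated step size:
\[
\|\Delta W_t\|_F\le \eta\sqrt{mn}\,\frac{\|O_t^{\mathrm{base}}\|_F}{\|O_t^{\mathrm{base}}\|_F+\epsilon}\le\eta\sqrt{mn}.
\]
The last inequality uses only $\epsilon>0$, since $x/(x+\epsilon)<1$ for every $x\ge 0$. The degenerate case $O_t^{\mathrm{base}}=0$ is covered automatically, because both sides of the bound on $\hat{\eta}_t$ times the norm are then zero.

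The main thing to check is not analytic. It is that the lemma concerns $\Delta W_t$ as defined in \eqref{eq:unified_update_app}, i.e.\ excluding the decoupled weight-decay term $-\eta\lambda W_t$. That term is not bounded by this argument, so the statement should be read strictly for the adaptive part of the step. It should also be noted that positivity of $\epsilon$ is what makes the bound exact rather than holding only ``up to $\epsilon$''.
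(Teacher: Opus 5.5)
Your proof is correct and matches the paper's argument step for step: the damping-only contraction lemma removes $\mathrm{diag}(c_t)$, and substituting $\hat{\eta}_t$ leaves the factor $\|O_t^{\mathrm{base}}\|_F/(\|O_t^{\mathrm{base}}\|_F+\epsilon)\le 1$. One small quibble on your closing remark: $\epsilon>0$ is needed only so that $\hat{\eta}_t$ is well defined when $O_t^{\mathrm{base}}=0$, since for nonzero $O_t^{\mathrm{base}}$ the bound would hold exactly even with $\epsilon=0$.
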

\begin{proof}
By Lemma~\ref{lem:damping_contraction_app}, $\|U_t\|_F=\|O_t^{\mathrm{base}}\mathrm{diag}(c_t)\|_F\le\|O_t^{\mathrm{base}}\|_F$.
Thus
\[
\|\Delta W_t\|_F=\hat{\eta}_t\|U_t\|_F
\le
\eta\sqrt{mn}\cdot\frac{\|O_t^{\mathrm{base}}\|_F}{\|O_t^{\mathrm{base}}\|_F+\epsilon}
\le \eta\sqrt{mn}.
\]
\end{proof}

\subsection{Assumptions for descent}

\begin{assumption}[$L$-smoothness]
\label{ass:smooth_app}
$f$ is $L$-smooth with respect to Frobenius norm:
$\|\nabla f(X)-\nabla f(Y)\|_F\le L\|X-Y\|_F$.
\end{assumption}

\begin{assumption}[Stochastic gradients]
\label{ass:sgd_app}
$\mathbb{E}[G_t\mid W_t]=\nabla f(W_t)$ and
$\mathbb{E}[\|G_t-\nabla f(W_t)\|_F^2\mid W_t]\le\sigma^2$.
\end{assumption}

\begin{assumption}[Alignment on the realized update]
\label{ass:align_delta_app}
There exists $\mu_\Delta>0$ such that for all $t$,
\begin{equation}
\mathbb{E}\big[\langle \nabla f(W_t), \Delta W_t\rangle \mid W_t\big]
\;\le\;
-\mu_\Delta \,\eta\,\|\nabla f(W_t)\|_F^2,
\label{eq:align_delta_app}
\end{equation}
where $\Delta W_t=-\hat{\eta}_tU_t$.
\end{assumption}

\subsection{Expected stationarity}

\begin{lemma}[Smoothness descent]
\label{lem:smooth_descent_app}
Under Assumption~\ref{ass:smooth_app}, for any $\Delta$,
\begin{equation}
f(W_t+\Delta)\le f(W_t)+\langle\nabla f(W_t),\Delta\rangle+\frac{L}{2}\|\Delta\|_F^2.
\label{eq:smooth_descent_app}
\end{equation}
\end{lemma}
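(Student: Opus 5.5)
The plan is to prove this with the standard integral form of Taylor's theorem along the segment from $W_t$ to $W_t+\Delta$, combined with the Lipschitz bound on the gradient from Assumption~\ref{ass:smooth_app}. Since $f$ is differentiable with a Lipschitz gradient, the map $s\mapsto f(W_t+s\Delta)$ on $[0,1]$ is continuously differentiable. Its derivative is $\langle\nabla f(W_t+s\Delta),\Delta\rangle$, where $\langle\cdot,\cdot\rangle$ is the Frobenius (trace) inner product. The fundamental theorem of calculus then gives
\[
f(W_t+\Delta)-f(W_t)=\int_0^1\langle\nabla f(W_t+s\Delta),\Delta\rangle\,ds .
\]

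Next I subtract and add $\langle\nabla f(W_t),\Delta\rangle$ inside the integral. This writes the difference $f(W_t+\Delta)-f(W_t)-\langle\nabla f(W_t),\Delta\rangle$ as
\[
\int_0^1\langle\nabla f(W_t+s\Delta)-\nabla f(W_t),\Delta\rangle\,ds .
\]
I bound the integrand in two moves. First, Cauchy--Schwarz for the Frobenius inner product bounds it by $\|\nabla f(W_t+s\Delta)-\nabla f(W_t)\|_F\,\|\Delta\|_F$. Second, Assumption~\ref{ass:smooth_app} bounds that product by $L\,s\,\|\Delta\|_F^2$. Integrating $Ls$ over $[0,1]$ yields $\tfrac{L}{2}\|\Delta\|_F^2$, which gives \eqref{eq:smooth_descent_app}.

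There is no real obstacle; the only point needing care is the regularity required for the fundamental theorem of calculus along the segment. Assumption~\ref{ass:smooth_app} implicitly assumes $f$ is differentiable, and Lipschitz continuity of $\nabla f$ makes $s\mapsto\langle\nabla f(W_t+s\Delta),\Delta\rangle$ continuous. The chain-rule identity therefore holds and the integral is a Riemann integral.

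Finally, the bound holds for arbitrary $\Delta$. It does not depend on any structure of the TrasMuon update: neither the calibrated step $\hat\eta_t$ nor the damping $c_t$ is used. It can therefore later be instantiated with the realized step, including the weight-decay term, and combined with Lemma~\ref{lem:tr_bound_app} and Assumption~\ref{ass:align_delta_app}.
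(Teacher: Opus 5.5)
Your proof is correct. The paper states this lemma without proof, treating it as the standard descent lemma for $L$-smooth functions, and your argument is exactly the standard proof: the fundamental theorem of calculus along the segment, Cauchy--Schwarz for the Frobenius inner product, and the Lipschitz bound $\|\nabla f(W_t+s\Delta)-\nabla f(W_t)\|_F\le Ls\|\Delta\|_F$ integrated over $s\in[0,1]$.
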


\begin{theorem}[Expected stationarity for RMS-calibrated, damping-only updates]
\label{thm:stationarity_app}
Assume \ref{ass:smooth_app} and \ref{ass:align_delta_app} with $\lambda=0$.
Then for any $T\ge 1$,
\begin{equation}
\frac{1}{T}\sum_{t=0}^{T-1}\mathbb{E}\|\nabla f(W_t)\|_F^2
\;\le\;
\frac{\mathbb{E}[f(W_0)]-f^\star}{\mu_\Delta\,\eta\,T}
\;+\;
\frac{L}{2\mu_\Delta}\,\eta\,mn,
\label{eq:stationarity_rate_app}
\end{equation}
where $f^\star=\inf_W f(W)$.
\end{theorem}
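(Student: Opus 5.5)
The argument is the standard descent-lemma telescoping, with two ingredients from earlier in the paper. The deterministic update-norm bound (Lemma~\ref{lem:tr_bound_app}) controls the quadratic term, and the alignment condition (Assumption~\ref{ass:align_delta_app}) controls the linear term.

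With $\lambda=0$, the update \eqref{eq:unified_update_app} is simply $W_{t+1}=W_t+\Delta W_t$. Applying Lemma~\ref{lem:smooth_descent_app} with $\Delta=\Delta W_t$ gives
\[
f(W_{t+1})\le f(W_t)+\langle\nabla f(W_t),\Delta W_t\rangle+\tfrac{L}{2}\|\Delta W_t\|_F^2 .
\]
We then take conditional expectation given $W_t$ and bound the two terms separately:
\begin{itemize}
  \item By Assumption~\ref{ass:align_delta_app}, the linear term is at most $-\mu_\Delta\eta\|\nabla f(W_t)\|_F^2$.
  \item By Lemma~\ref{lem:tr_bound_app}, which holds almost surely because it relies only on $c_{t,j}\le 1$ and the definition of $\hat\eta_t$, the quadratic term is at most $\tfrac{L}{2}\eta^2 mn$.
\end{itemize}
This gives the one-step inequality
\[
\mathbb{E}[f(W_{t+1})\mid W_t]\le f(W_t)-\mu_\Delta\eta\|\nabla f(W_t)\|_F^2+\tfrac{L}{2}\eta^2 mn .
\]

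Next, take total expectation via the tower property and sum over $t=0,\dots,T-1$. The sum telescopes to
\[
\mu_\Delta\eta\sum_{t=0}^{T-1}\mathbb{E}\|\nabla f(W_t)\|_F^2\le \mathbb{E}[f(W_0)]-\mathbb{E}[f(W_T)]+\tfrac{L}{2}\eta^2 mnT .
\]
Since $\mathbb{E}[f(W_T)]\ge f^\star$, dividing by $\mu_\Delta\eta T$ yields exactly \eqref{eq:stationarity_rate_app}. The constant $L\eta mn/(2\mu_\Delta)$ comes from $\tfrac{L}{2}\eta^2mn/(\mu_\Delta\eta)$.

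There is no real obstacle here, since all the difficulty has been absorbed into Assumption~\ref{ass:align_delta_app}. Two points need care. First, $f^\star>-\infty$ must be implicitly assumed; otherwise the bound is vacuous, which is acceptable. Second, the conditioning must be handled correctly: the alignment bound is stated conditionally on $W_t$, while $W_t$ is random. The tower property handles this, provided $f(W_t)$ is integrable, which I would state as a standing measurability and integrability assumption. Assumption~\ref{ass:sgd_app} is not needed in this argument.
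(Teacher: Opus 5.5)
Your proposal is correct and follows the paper's proof essentially step for step: apply the smoothness descent inequality with $\Delta=\Delta W_t$, take conditional expectation, bound the linear term via Assumption~\ref{ass:align_delta_app} and the quadratic term via Lemma~\ref{lem:tr_bound_app}, then sum, telescope, and use $\mathbb{E}[f(W_T)]\ge f^\star$. Your remarks on the tower property, integrability, and $f^\star>-\infty$ make explicit points the paper leaves implicit.
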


\begin{proof}
Apply Lemma~\ref{lem:smooth_descent_app} with $\Delta=\Delta W_t$, take conditional expectation,
use Assumption~\ref{ass:align_delta_app}, and bound $\|\Delta W_t\|_F^2$ by Lemma~\ref{lem:tr_bound_app}.
Sum over $t=0,\dots,T-1$ and telescope $f(W_t)$.
\end{proof}

\subsection{Extensions: weight decay, stochasticity, and PL}

\paragraph{Weight decay.}
Decoupled weight decay can be handled by analyzing $f_\lambda(W)=f(W)+\frac{\lambda}{2}\|W\|_F^2$
or treating $(1-\eta\lambda)W_t$ as an additional contraction term.

\paragraph{Stochastic gradients.}
Under Assumption~\ref{ass:sgd_app}, the bound acquires an additional $\mathcal{O}(\eta\sigma^2)$ term
as in standard SGD analyses.

\paragraph{PL / strong convexity.}
If $f$ satisfies the PL condition, one obtains linear-type convergence to an $\mathcal{O}(\eta mn)$ (or $\mathcal{O}(\eta\sigma^2)$) neighborhood.


\section{Additional Results: Language-Model Pretraining}
\label{app:LMtraining}

\paragraph{Scope.}
This appendix provides supplementary visualizations for language-model pretraining and a minimal replication check.
The intent is \emph{not} to introduce new claims beyond the main text, but to (i) document late-stage behavior under the fixed training budget and (ii) reduce the risk that qualitative trends are specific to a single random seed.

\paragraph{Fixed-budget protocol (identical across optimizers).}
All runs follow the same fixed-budget protocol as Section~\ref{sec:QWEN}.
Training starts from random initialization and proceeds for 1500 optimization steps on FineWeb-Edu with sequence length 1024 and global batch size 1024, thereby fixing the token budget across optimizers.
A single fixed learning rate $\eta=0.0036$ and weight decay $\lambda=0.005$ are used for all optimizers, with no learning-rate sweep and no optimizer-specific retuning in this appendix.
Two schedule variants are reported: (i) a warmup--stable--decay schedule with 10\% warmup and a final 20\% decay phase, and (ii) the same schedule without warmup.

\paragraph{Metrics.}
Training loss is reported as a high-resolution indicator of early-stage optimization dynamics and stability under an identical training budget.
This appendix presents the corresponding loss trajectories to complement the main-text summaries.

\subsection{Qwen3-0.6B: Late-stage loss under a fixed 1500-step budget}
\label{app:Qwen3_last_step}

The late-stage training window (steps 1200--1500) is examined to complement the early-stage analysis in the main text.
Figure~\ref{fig:qwen_last_steps} shows the corresponding loss trajectories in this window for both schedule variants.
These curves document late-stage behavior under a fixed training budget and are not intended to imply full convergence.

\begin{figure}[t]
  \centering
  \begin{subfigure}[b]{0.49\columnwidth}
    \centering
    \includegraphics[width=\linewidth]{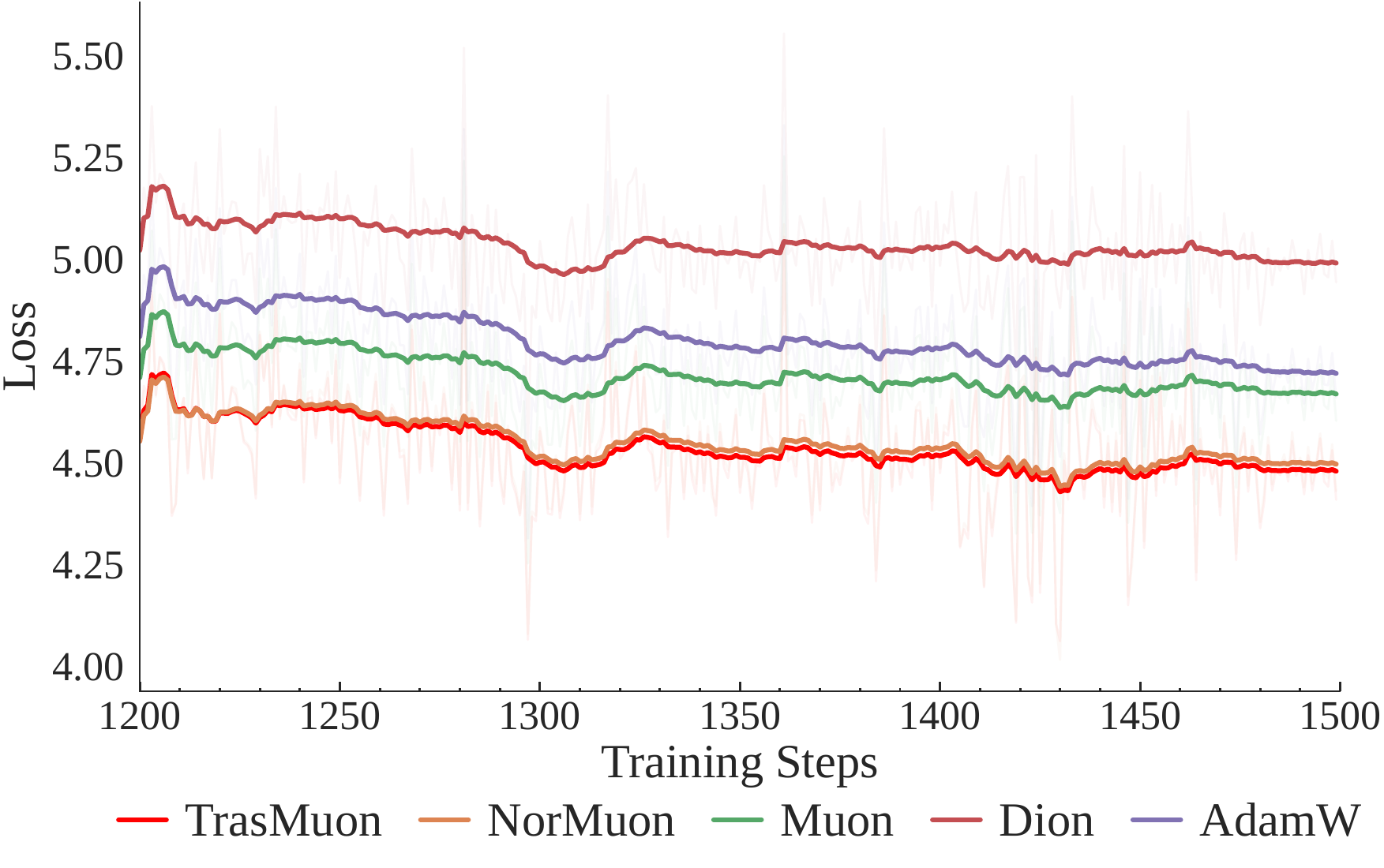}
    \caption{Warmup-enabled.}
    \label{fig:qwen_warm01_last}
  \end{subfigure}
  \hfill
  \begin{subfigure}[b]{0.49\columnwidth}
    \centering
    \includegraphics[width=\linewidth]{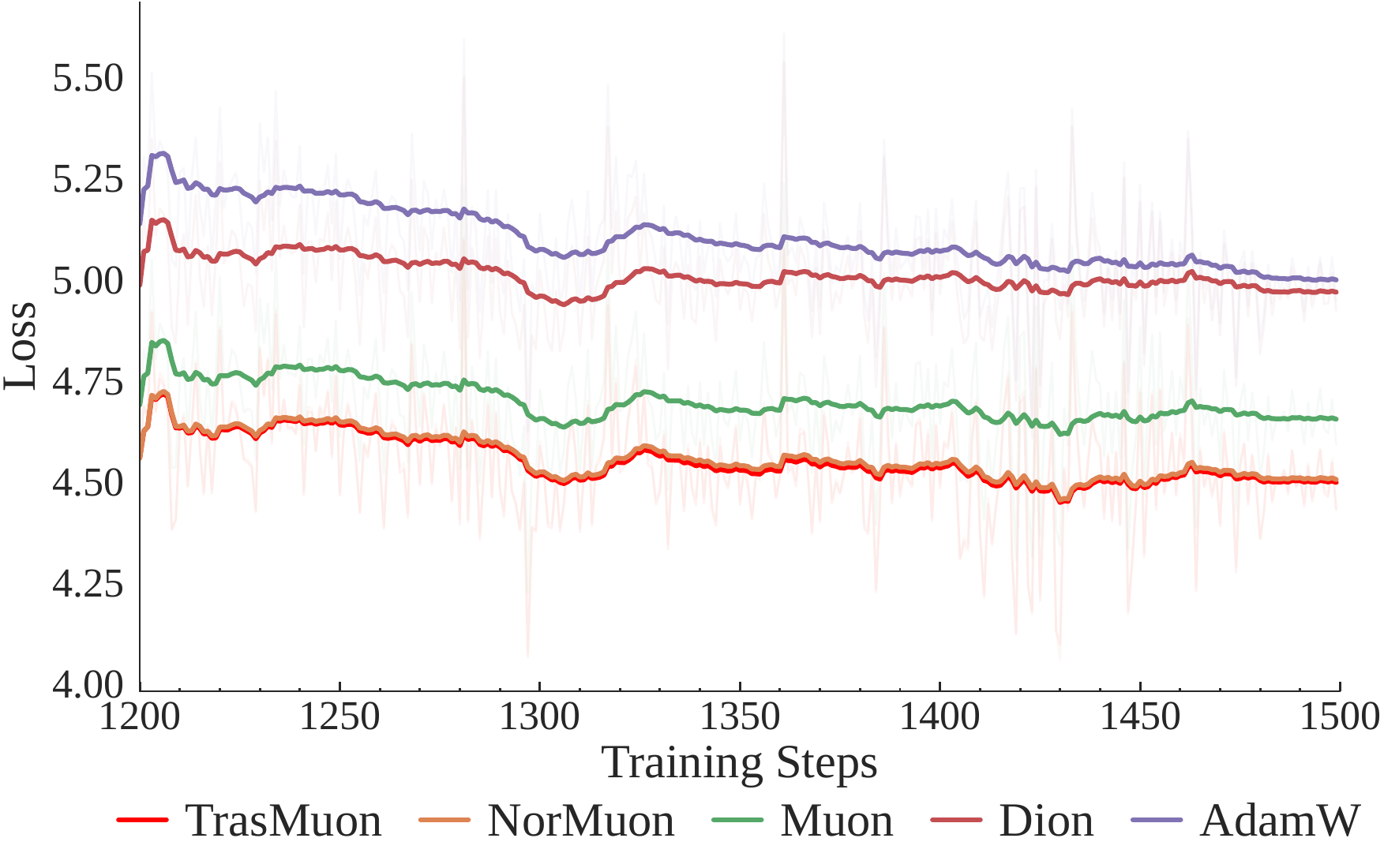}
    \caption{Warmup-free.}
    \label{fig:qwen_nowarm_last}
  \end{subfigure}
  \caption{\textbf{Qwen3-0.6B late-stage window (steps 1200--1500) under a fixed-budget protocol.}
  Training loss under warmup-enabled vs.\ warmup-free variants of the same warmup--stable--decay schedule with a final decay phase near the end of training.}
  \label{fig:qwen_last_steps}
\end{figure}

\subsection{Qwen3-0.6B: Minimal replication check}
\label{app:qwen_replication}

The main text reports Qwen3-0.6B results under a single fixed seed.
To reduce the possibility that the observed qualitative trends are specific to that particular random draw, the same Qwen3-0.6B runs are repeated with an additional random seed while keeping \emph{all} other settings identical, including the data pipeline, token budget, model architecture, learning rate, weight decay, and schedule variant.
These replication runs are not used for hyperparameter tuning and are included solely as a robustness check under an identical protocol.

Figure~\ref{fig:qwen_rep_seed42} presents early-stage loss trajectories over steps 0--350 for the additional seed.
Across both schedule variants, the qualitative optimizer behavior remains consistent with the main-text observations.
No claim of statistical significance is made from this minimal replication; the results are provided as supplementary evidence of qualitative robustness under matched conditions.

\begin{figure}[t]
  \centering
  \begin{subfigure}[b]{0.49\columnwidth}
    \centering
    \includegraphics[width=\linewidth]{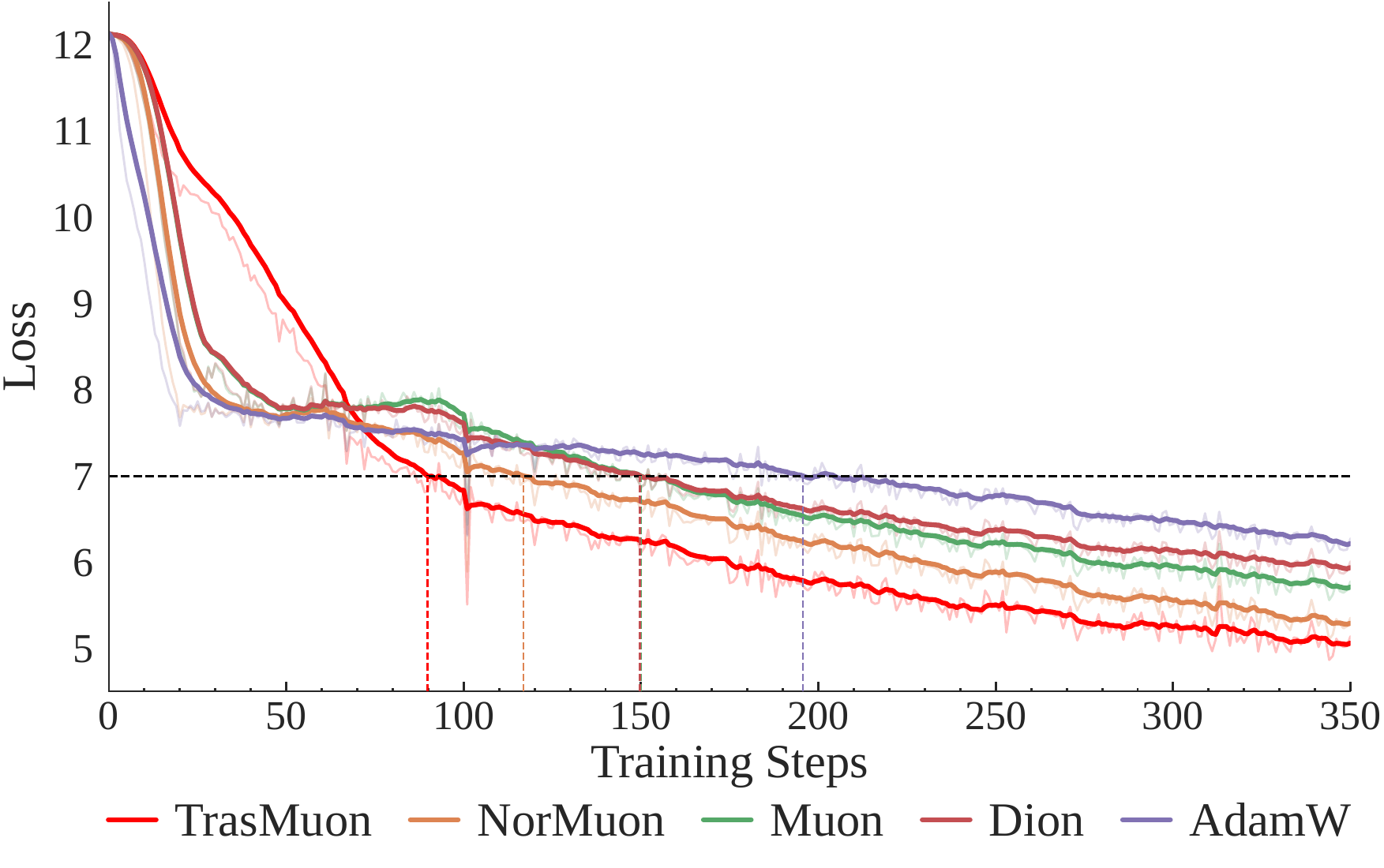}
    \caption{Warmup-enabled, seed 42.}
    \label{fig:qwen_warm01_seed42}
  \end{subfigure}
  \hfill
  \begin{subfigure}[b]{0.49\columnwidth}
    \centering
    \includegraphics[width=\linewidth]{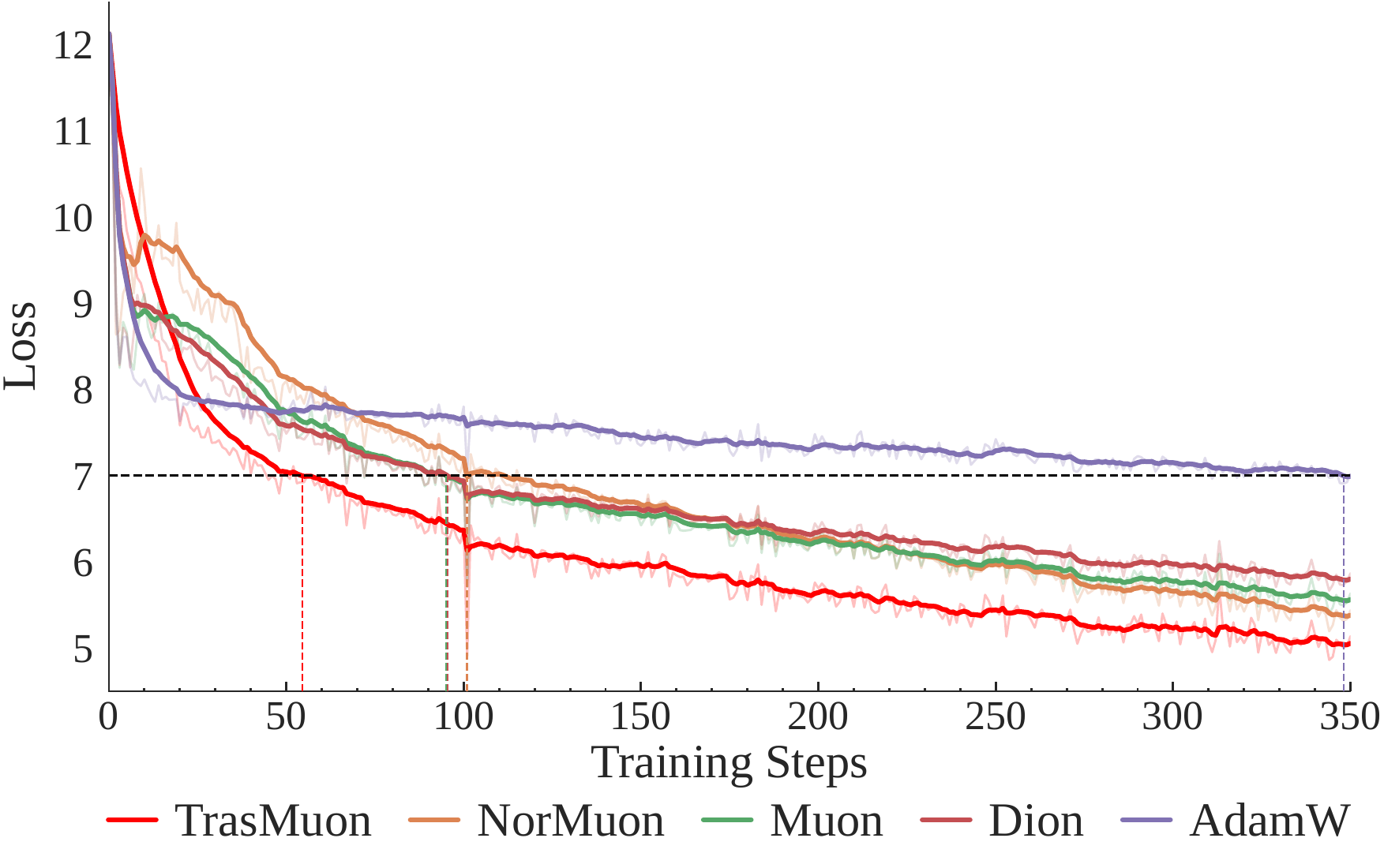}
    \caption{Warmup-free, seed 42.}
    \label{fig:qwen_nowarm_seed42}
  \end{subfigure}
  \caption{\textbf{Qwen3-0.6B replication under an identical protocol (additional seed).}
  Early-stage training loss (steps 0--350) for an additional random seed, under the same configuration as the main-text experiment.}
  \label{fig:qwen_rep_seed42}
\end{figure}

\subsection{GPT-2 Small: Additional architecture under the same protocol}
\label{app:gpt2_small}

GPT-2 Small is additionally evaluated under the same fixed-budget protocol to assess whether the observed qualitative dynamics transfer to a different architecture.
Figure~\ref{fig:gpt_small_warmup_compare} presents early-stage loss trajectories over steps 0--350 for both schedule variants, and Figure~\ref{fig:gpt_s_last_steps} reports the late-stage window over steps 1200--1500.
These plots are included to complement the main-text results, and the late-stage window is reported for completeness rather than as evidence of full convergence.

\begin{figure}[t]
  \centering
  \begin{subfigure}[b]{0.49\columnwidth}
    \centering
    \includegraphics[width=\linewidth]{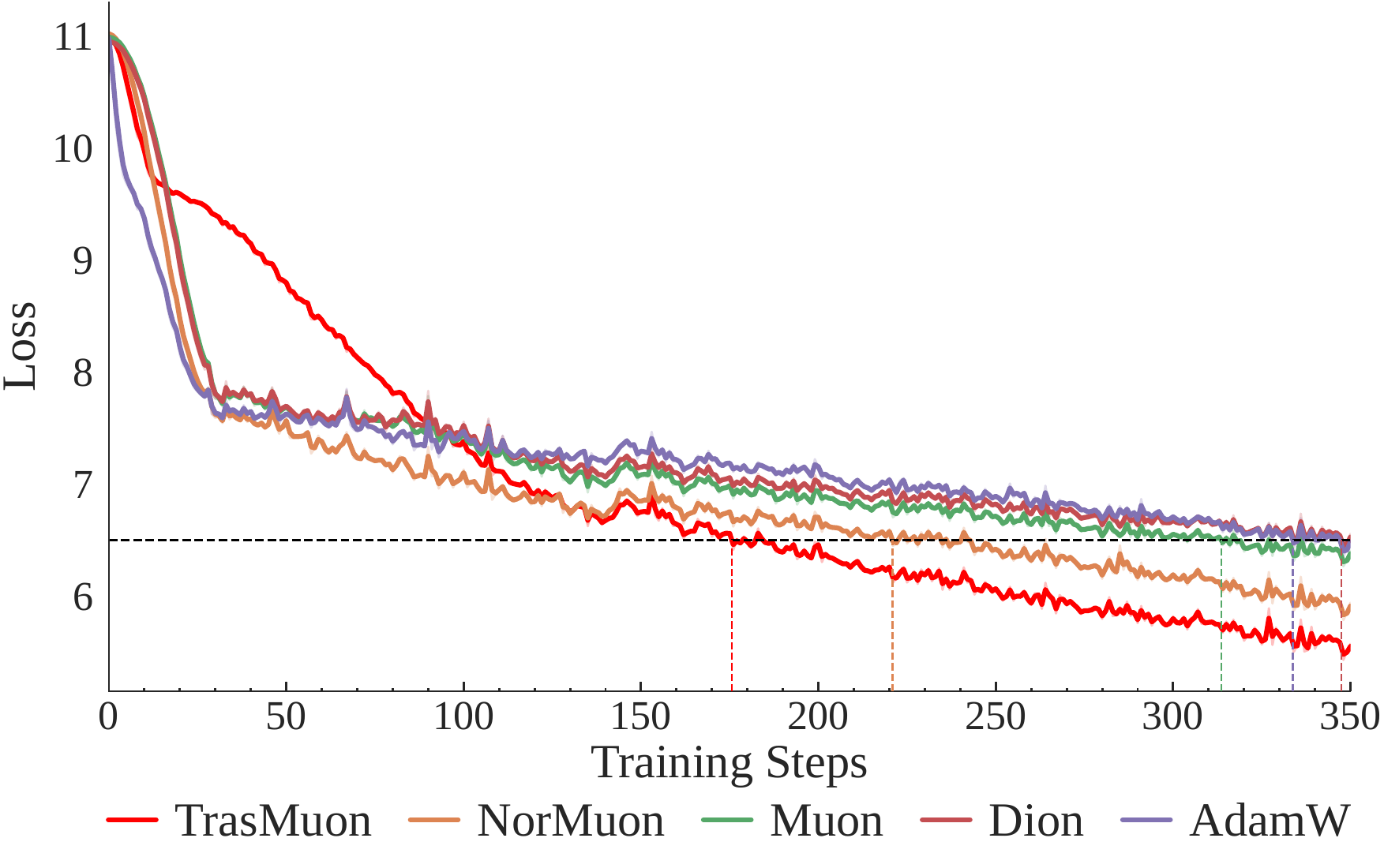}
    \caption{Warmup-enabled.}
    \label{fig:gpt_s_warm01}
  \end{subfigure}
  \hfill
  \begin{subfigure}[b]{0.49\columnwidth}
    \centering
    \includegraphics[width=\linewidth]{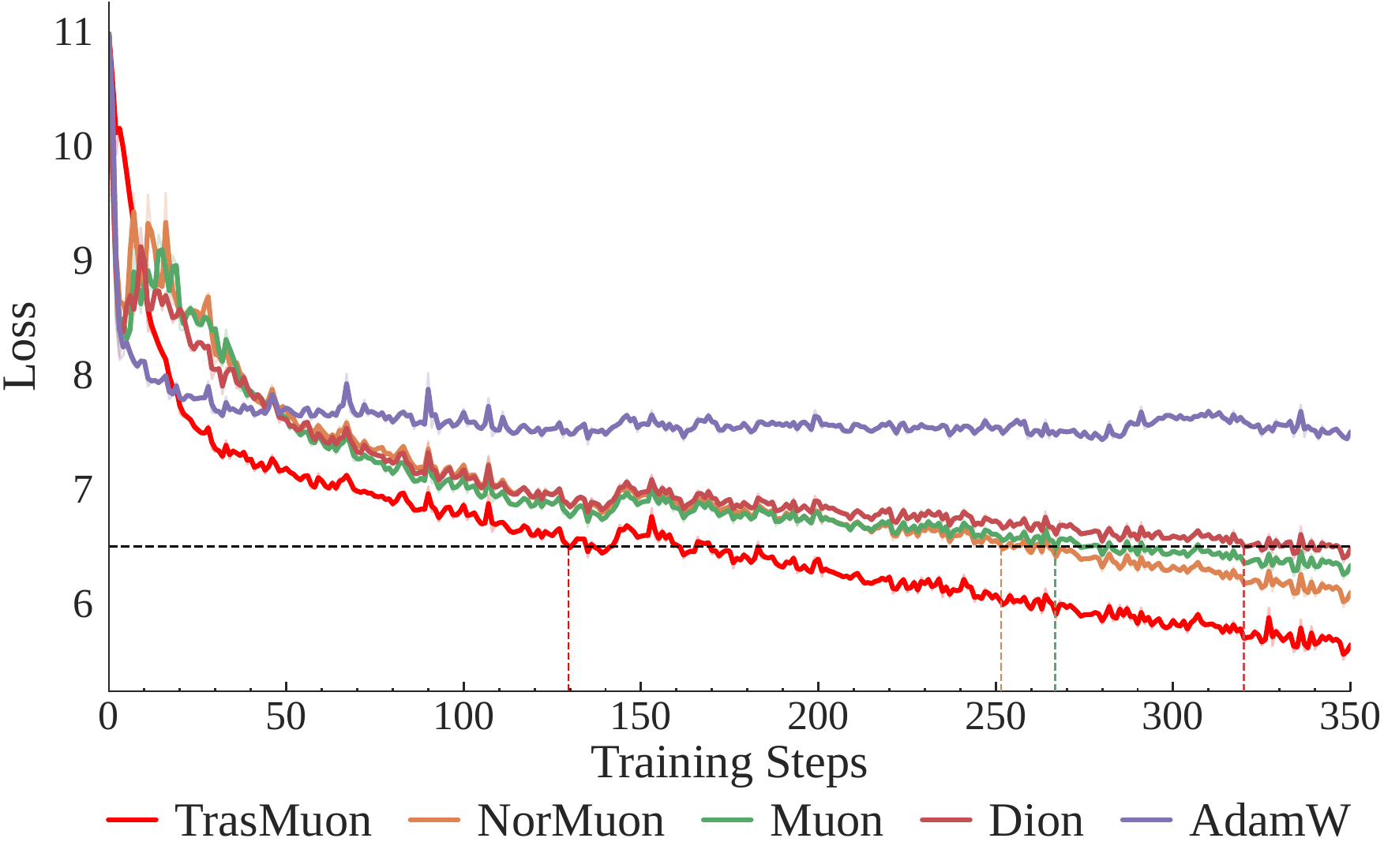}
    \caption{Warmup-free.}
    \label{fig:gpt_s_nowarm}
  \end{subfigure}
  \caption{\textbf{GPT-2 Small early-stage dynamics (steps 0--350) under a fixed-budget protocol.}
  Warmup-enabled vs.\ warmup-free variants of the same warmup--stable--decay schedule.}
  \label{fig:gpt_small_warmup_compare}
\end{figure}

\begin{figure}[t]
  \centering
  \begin{subfigure}[b]{0.49\columnwidth}
    \centering
    \includegraphics[width=\linewidth]{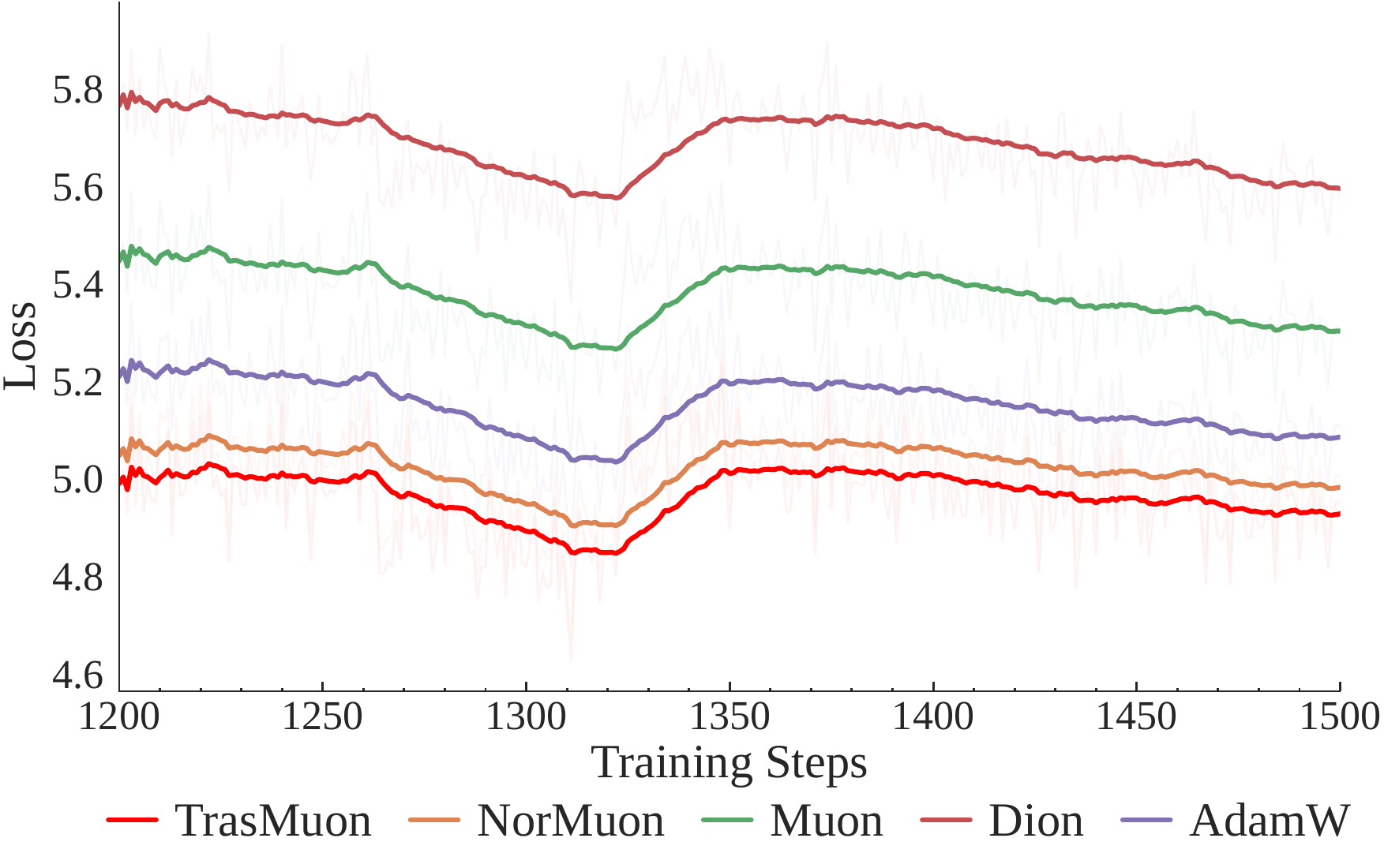}
    \caption{Warmup-enabled.}
    \label{fig:gpt_s_warm01_last}
  \end{subfigure}
  \hfill
  \begin{subfigure}[b]{0.49\columnwidth}
    \centering
    \includegraphics[width=\linewidth]{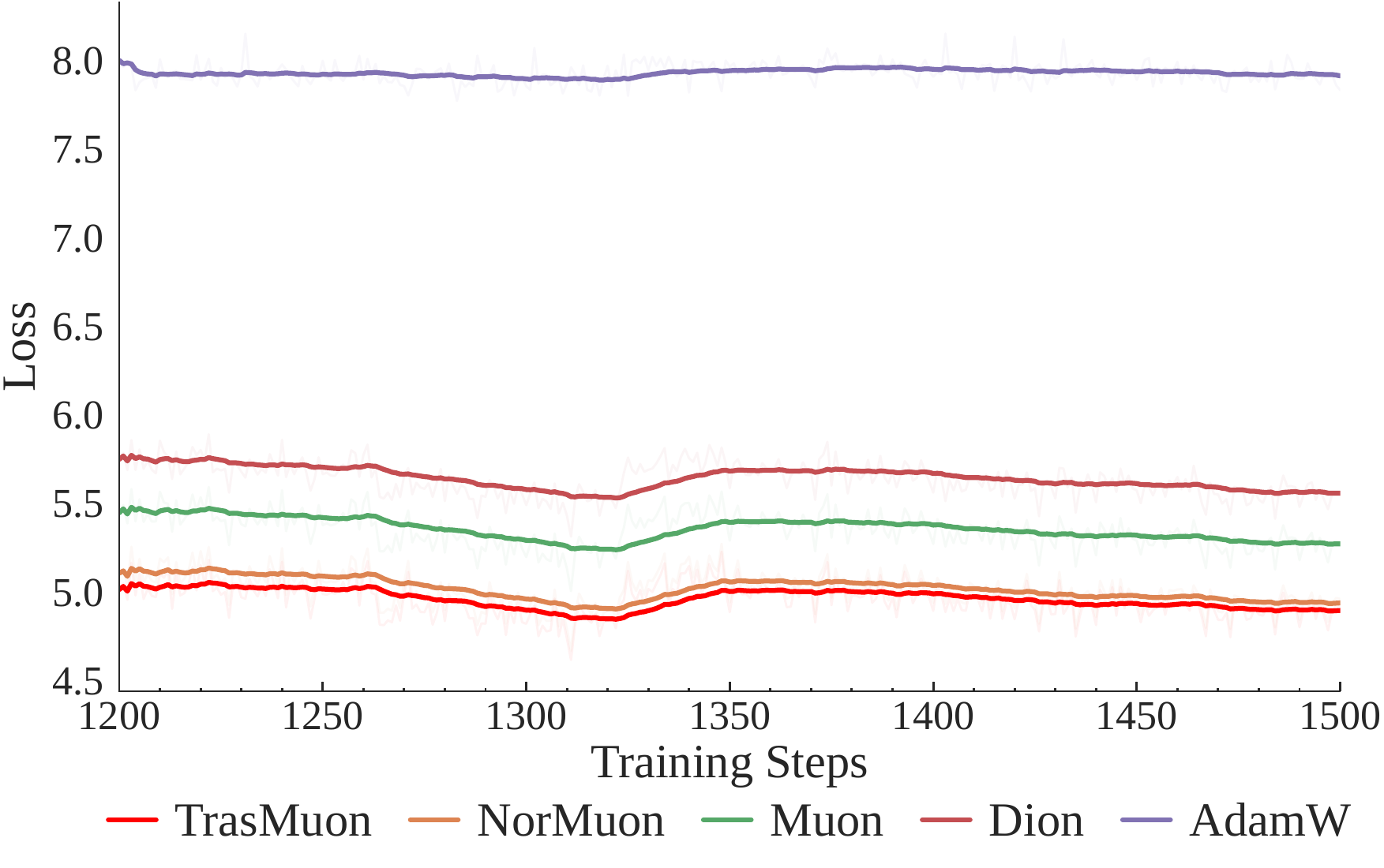}
    \caption{Warmup-free.}
    \label{fig:gpt_s_nowarm_last}
  \end{subfigure}
  \caption{\textbf{GPT-2 Small late-stage window (steps 1200--1500) under a fixed-budget protocol.}
  Training loss for warmup-enabled vs.\ warmup-free schedule variants.}
  \label{fig:gpt_s_last_steps}
\end{figure}

\section{Additional Vision Transformer Experiments}
\label{app:vit_extra}

\subsection{ImageNet-100 Data Source and Construction}
\label{app:imagenet100_data_source}
To reduce dataset preparation overhead, we use a publicly available ImageNet-100 \emph{image archive} hosted on Kaggle~\citep{ambityga2021imagenet100} as a convenient storage source.
Importantly, the Kaggle archive is used \emph{only} as a source of image files.
Class membership and the train/validation split are defined strictly according to the ILSVRC-2012 (ImageNet-1k) specification.
Concretely, we select a fixed subset of 100 ILSVRC-2012 classes (specified by synset IDs) and retain only images whose labels match these synsets; we then follow the standard ILSVRC-2012 train/validation split.
As a sanity check, we verify (in code) the synset-to-index mapping, per-class image counts, and that no images outside the selected synsets are included.
This ensures that the benchmark corresponds to a well-defined subset of ImageNet-1k, independent of the hosting platform.

\subsection{ImageNet-100 Experimental Protocol and Seed Allocation}
\label{app:imagenet100_exp_protocol}

We evaluate optimization methods on ImageNet-100 using a ViT-Base/16 architecture at $224\times224$ resolution.
Training follows a standard ViT/DeiT recipe~\citep{dosovitskiy2020vit,touvron2021training}, including random resized cropping, horizontal flipping, color jitter, RandAugment, and random erasing, together with label smoothing and Mixup (CutMix disabled).

Weight decay is applied with parameter grouping: LayerNorm and bias parameters use zero weight decay, and all remaining parameters use a fixed decay rate.
Unless otherwise stated, all experiments use a base learning rate of $1\times10^{-3}$ and a weight decay of $5\times10^{-2}$.
We compare AdamW~\citep{loshchilov2019decoupledweightdecayregularization},
Muon~\citep{jordan2024muon},
NorMuon~\citep{liNorMuonMakingMuon2025a},
and \textsc{TrasMuon} under the same model architecture, data pipeline, training schedule, and compute budget, using three random seeds (42, 43, 44).
Optimizer-specific parameters follow the respective published defaults and our unified implementation. For each method, we report the mean and standard deviation of \emph{validation} top-1 accuracy across seeds (Table~\ref{tab:vit_imagenet100}).

\begin{table}[t]
\centering
\small
\setlength{\tabcolsep}{8pt}
\renewcommand{\arraystretch}{1.15}
\caption{\textbf{ViT on ImageNet-100.}
Final \emph{validation} top-1 accuracy (mean $\pm$ standard deviation) over three random seeds (42, 43, 44).}
\label{tab:vit_imagenet100}
\begin{tabular}{lcc}
\toprule
Optimizer & Accuracy Mean & Accuracy Std \\
\midrule
\textbf{TrasMuon} & \textbf{77.47\%} & \textbf{0.34\%} \\
NorMuon & 77.10\% & 0.21\% \\
Muon & 69.69\% & 0.08\% \\
AdamW & 42.53\% & 4.38\% \\
\bottomrule
\end{tabular}
\end{table}

\subsection{CIFAR-100: Column-Energy Stress Test}
\label{app:vit_cifar_burst}

\paragraph{Setup (stress-test benchmark).}
We conduct a controlled stress test on CIFAR-100 using a Vision Transformer to assess optimizer robustness under axis-localized nonstationarity.
Unless otherwise specified, all runs use 30 epochs, batch size 128, learning rate $1\times10^{-3}$, weight decay $5\times10^{-3}$, and identical data loading and preprocessing.
We report mean and standard deviation of test accuracy over three random seeds (42, 43, 44).

\paragraph{Burst injection.}
To introduce structured nonstationarity without changing the data distribution, we inject sparse \emph{column-localized gradient bursts} into selected large 2D parameter matrices (attention and MLP projections).
The goal of this protocol is not to claim that injected bursts match the exact distribution of naturally occurring outliers, but to provide a reproducible mechanism-level stressor that concentrates energy along a small subset of feature axes.
Crucially, the same burst pattern (targeted layers, selected columns, and timesteps) is applied across optimizers by fixing the burst random seed, enabling direct, fair comparison of optimizer responses.
Full implementation details are provided in Appendix~\ref{app:burst_protocol}.

\paragraph{Observed behavior.}
Table~\ref{tab:vit_cifar100} summarizes test accuracy under burst injection.
Across this stress setting, Muon-family optimizers maintain higher accuracy and lower variability than AdamW.
Normalization-based variants reduce variance relative to Muon, while \textsc{TrasMuon} attains the highest mean accuracy and the smallest spread across seeds in this configuration.

\begin{table}[t]
\centering
\small
\setlength{\tabcolsep}{8pt}
\renewcommand{\arraystretch}{1.15}
\caption{\textbf{ViT on CIFAR-100 with column-localized gradient bursts.}
Final top-1 test accuracy (mean $\pm$ standard deviation) over three seeds (42, 43, 44).}
\label{tab:vit_cifar100}
\begin{tabular}{lcc}
\toprule
Optimizer & Accuracy Mean & Accuracy Std \\
\midrule
\textbf{TrasMuon} & \textbf{58.77\%} & \textbf{0.22\%} \\
NorMuon & 58.31\% & 0.52\% \\
Muon & 57.48\% & 0.52\% \\
AdamW & 35.03\% & 5.05\% \\
\bottomrule
\end{tabular}
\end{table}

\subsection{Burst Injection Protocol}
\label{app:burst_protocol}

We define a column-wise gradient burst operator applied to selected 2D weight matrices to induce controlled column-energy spikes without altering the data distribution.
At each burst step, we select $k$ column indices (random or fixed, as specified) and perturb each selected column by adding a normalized random direction:
\[
g_{:,j} \leftarrow g_{:,j} + \alpha \cdot \frac{u}{\lVert u\rVert_2 + \epsilon},
\quad u \sim \mathcal{N}(0, I).
\]
The amplitude $\alpha$ can be specified as a fixed absolute value or scaled relative to the current gradient magnitude using a Frobenius-normalized reference:
\[
\alpha = \rho \cdot \frac{\lVert g \rVert_F}{\sqrt{d_{\text{out}} d_{\text{in}}}},
\]
optionally clipped by a maximum threshold.
Bursts occur every $T$ optimization steps after an optional warmup phase and target only designated 2D layers (e.g., attention projections and MLP weights).
Burst events and optimizer internal statistics (including feature-wise clipping coefficients, when applicable) are logged at the same timesteps to enable direct alignment between perturbations and optimizer responses.

\section{Additional PINN Details: Random-ROI Sampling Stress Test}
\label{app:pinn_roi_protocol}

\paragraph{Task recap.}
We consider the Helmholtz equation on $\Omega=[0,1]^2$ with homogeneous Dirichlet boundary conditions and a manufactured solution $u^\star$ (see Section~\ref{sec:pinns} for the PDE, loss definition, and the rel$L_2$ metric).
This appendix specifies the non-stationary sampling protocol used to stress-test optimizer robustness under controlled distribution shifts.

\paragraph{Training configuration.}
We use an MLP with hidden dimensions $(128,128,128)$ and $\tanh$ activation, trained for $4000$ steps.
At each step we sample $N_r=1024$ interior points and $N_b=256$ boundary points with boundary weight $\lambda_b=100$.
We evaluate every $200$ steps on a fixed $128\times128$ grid.

\paragraph{ROI sampling as a distribution shift.}
Starting from step $t_0=1000$, we trigger ROI events every $K_{\mathrm{out}}=20$ steps.
At an ROI event step $t$, we sample interior points from a mixture distribution
\begin{equation}
p_t(x) \;=\; (1-\alpha)\,p_0(x) + \alpha\,p_{\mathrm{roi}}^{(t)}(x),
\qquad \alpha = 0.95,
\label{eq:roi_mixture}
\end{equation}
where $p_0$ is uniform over $\Omega$ and $p_{\mathrm{roi}}^{(t)}$ is uniform over the selected ROI patch $\Omega_{\mathrm{roi}}^{(t)}$.
This produces a controlled, non-stationary sampling distribution (a stressor analogous to adaptive/ROI refinement in practical PINNs).

\paragraph{Random ROI patch pool (reproducible).}
The ROI patch $\Omega_{\mathrm{roi}}^{(t)}$ is selected from a fixed pool (Table~\ref{tab:roi_patch_pool}) using a deterministic \texttt{step\_hash} seeding rule.
Thus ROI locations vary across events yet remain fully reproducible given the experiment configuration and training step index.
\begin{table}[t]
\centering
\small
\caption{ROI patch pool used for random ROI events (rectangles are $[x_0,x_1]\times[y_0,y_1]$).}
\label{tab:roi_patch_pool}
\begin{tabular}{l}
\toprule
ROI patches \\
\midrule
Corners: $[0.00,0.03]\times[0.00,0.03]$, $[0.97,1.00]\times[0.00,0.03]$, $[0.00,0.03]\times[0.97,1.00]$, $[0.97,1.00]\times[0.97,1.00]$ \\
Edges: $[0.48,0.53]\times[0.00,0.05]$, $[0.48,0.53]\times[0.95,1.00]$, $[0.00,0.05]\times[0.48,0.53]$, $[0.95,1.00]\times[0.48,0.53]$ \\
Interior: $[0.20,0.25]\times[0.20,0.25]$, $[0.45,0.50]\times[0.10,0.15]$, $[0.10,0.15]\times[0.55,0.60]$, $[0.60,0.65]\times[0.60,0.65]$ \\
\bottomrule
\end{tabular}
\end{table}

\paragraph{ROI-local evaluation.}
To quantify local disturbance and recovery, at ROI event steps we additionally compute an ROI-local rel$L_2$ on a $64\times64$ grid restricted to $\Omega_{\mathrm{roi}}^{(t)}$, and estimate non-ROI error by sampling $4096$ points from $\Omega\setminus \Omega_{\mathrm{roi}}^{(t)}$.
These diagnostics separate global convergence from localized behavior under distribution shifts.

\subsection{Step-Size Alignment and Learning-Rate Sensitivity (PINN Helmholtz, $k=2$)}
\label{app:pinn_stepsize_lr}

\paragraph{Motivation.}
Nominal learning rates are not directly comparable across update rules because different optimizers can induce different \emph{effective} parameter-space step magnitudes.
To reduce the confound that performance differences are driven by trivial step-size mismatches, we complement the main comparison with (i) a short step-size alignment diagnostic and (ii) a shared learning-rate (LR) sweep.

\paragraph{Step-size alignment diagnostic.}
Figure~\ref{fig:pinn_stepsize_lr}a reports the achieved effective step size,
computed from parameter differences during an initial stationary window (before any ROI perturbations are introduced),
\[
s_t \;=\; \frac{\|\Delta\theta_t\|_2}{\sqrt{P}}, \qquad P=\mathrm{dim}(\theta).
\]
We target a fixed reference magnitude (dashed line) with a tolerance band (shaded region).
Muon and \textsc{TrasMuon} attain comparable achieved step sizes within the tolerance range, supporting that subsequent robustness comparisons are not explained by a simple global step-size discrepancy.

\paragraph{Learning-rate sweep.}
Figure~\ref{fig:pinn_stepsize_lr}b shows the final relative $L_2$ error under a shared LR sweep.
Shaded bands summarize variability across random seeds (median with an interquartile range).
Both methods exhibit the expected degradation as LR increases beyond the stable region.
Together with the alignment diagnostic, this sweep provides a complementary view of optimizer sensitivity to step-size calibration under the same training and sampling protocol.

\begin{figure}[t]
  \centering
  \begin{subfigure}[b]{0.49\columnwidth}
    \centering
    \includegraphics[width=\linewidth]{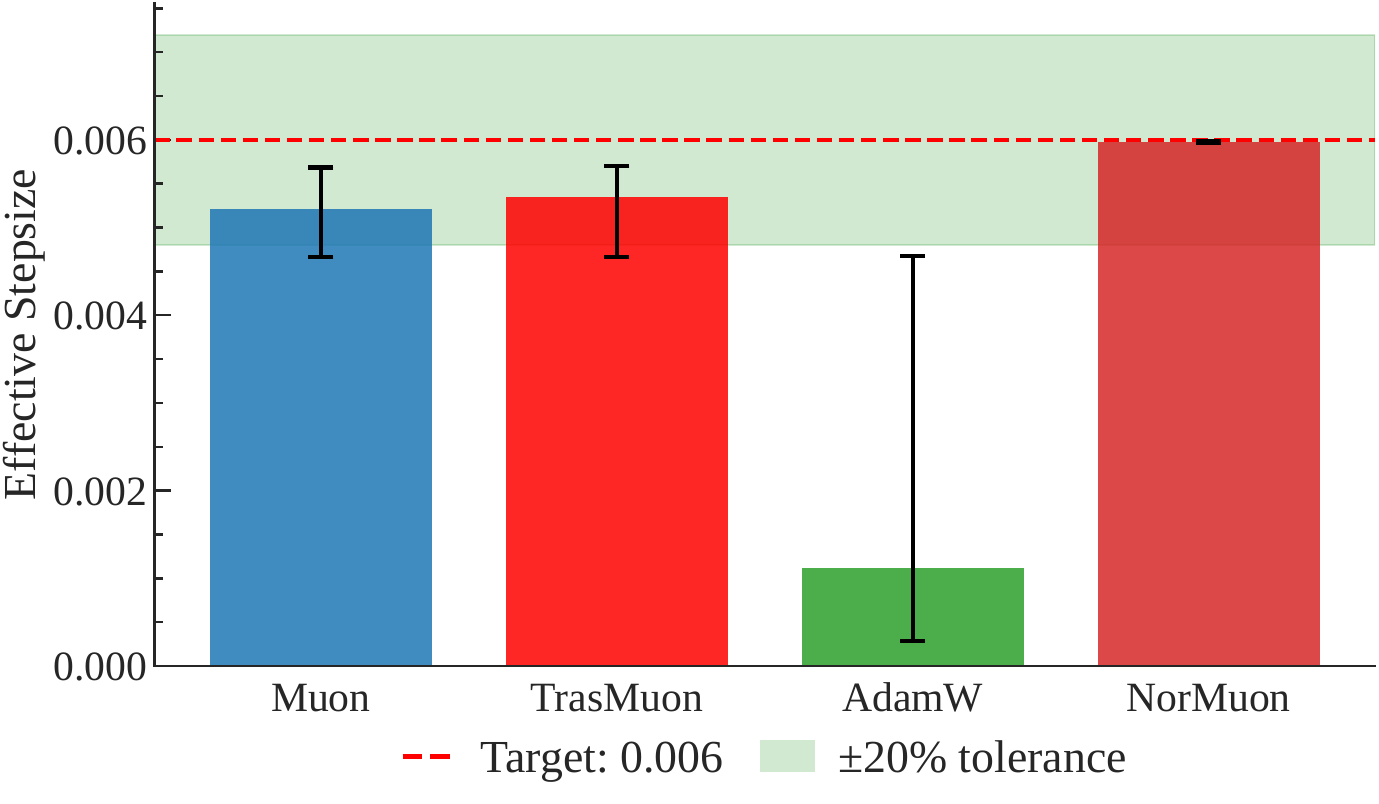}
    \caption{\textbf{Achieved effective step size} during the stationary alignment window (before ROI events). The dashed line is the target, and the shaded region indicates tolerance. Error bars denote variability across seeds.}
    \label{fig:pinn_stepsize_align}
  \end{subfigure}\hfill
  \begin{subfigure}[b]{0.49\columnwidth}
    \centering
    \includegraphics[width=\linewidth]{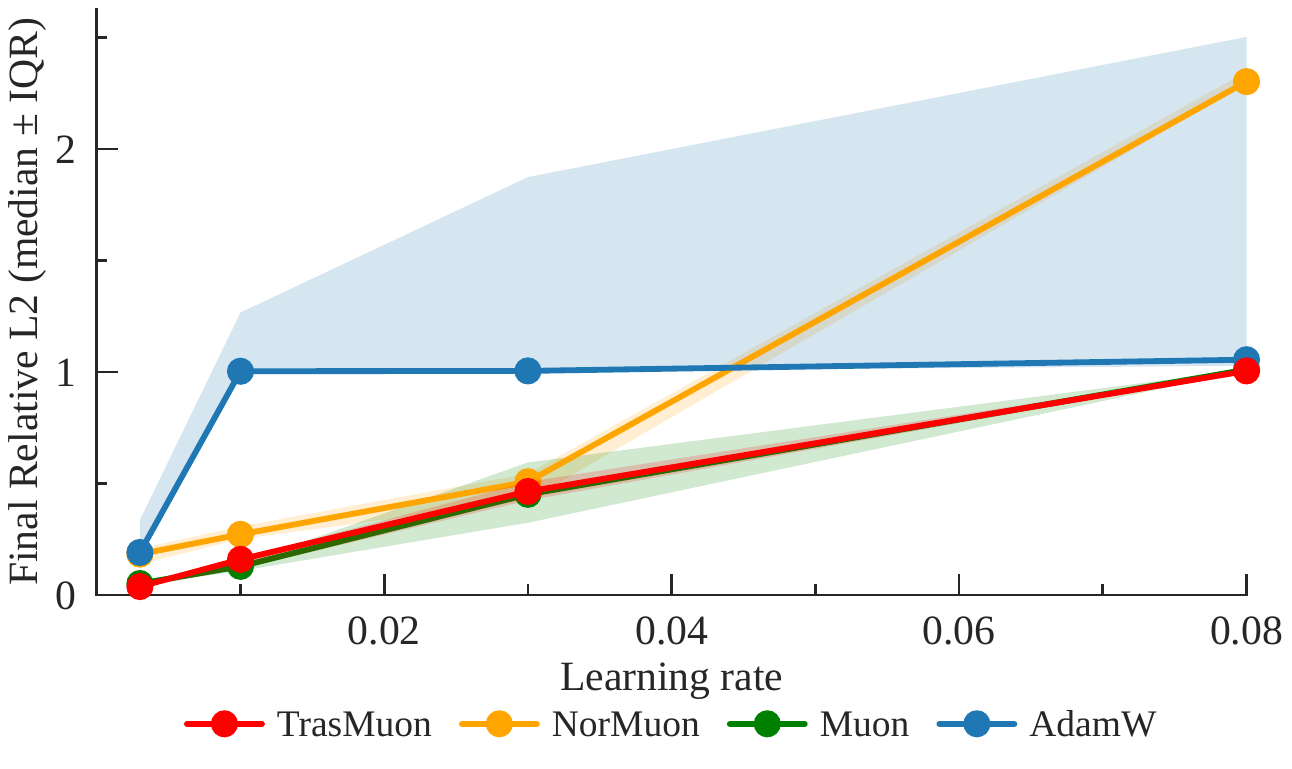}
    \caption{\textbf{LR sensitivity} of final relative $L_2$ error under a shared LR sweep. Lines show the median across seeds and shaded bands indicate the IQR.}
    \label{fig:pinn_lr_sweep}
  \end{subfigure}
  \caption{\textbf{PINN Helmholtz ($k=2$): step-size alignment and LR sensitivity.}
  (a) Effective step-size alignment reduces trivial magnitude confounds when comparing optimizers with different update rules.
  (b) A shared LR sweep summarizes sensitivity to step-size calibration via final relative $L_2$ error.}
  \label{fig:pinn_stepsize_lr}
\end{figure}

\subsection{Additional PINN Diagnostics: Metric Distributions Across Seeds}
\label{app:pinn_metric_dist}

We visualize the distribution of key robustness and accuracy metrics across random seeds for the PINN Helmholtz benchmark(\(k=2\)). This figure serves as a distributional check to ensure that the reported trends are not driven by a single favorable run.

\begin{figure}[t]
  \centering
  \includegraphics[width=0.8\columnwidth] {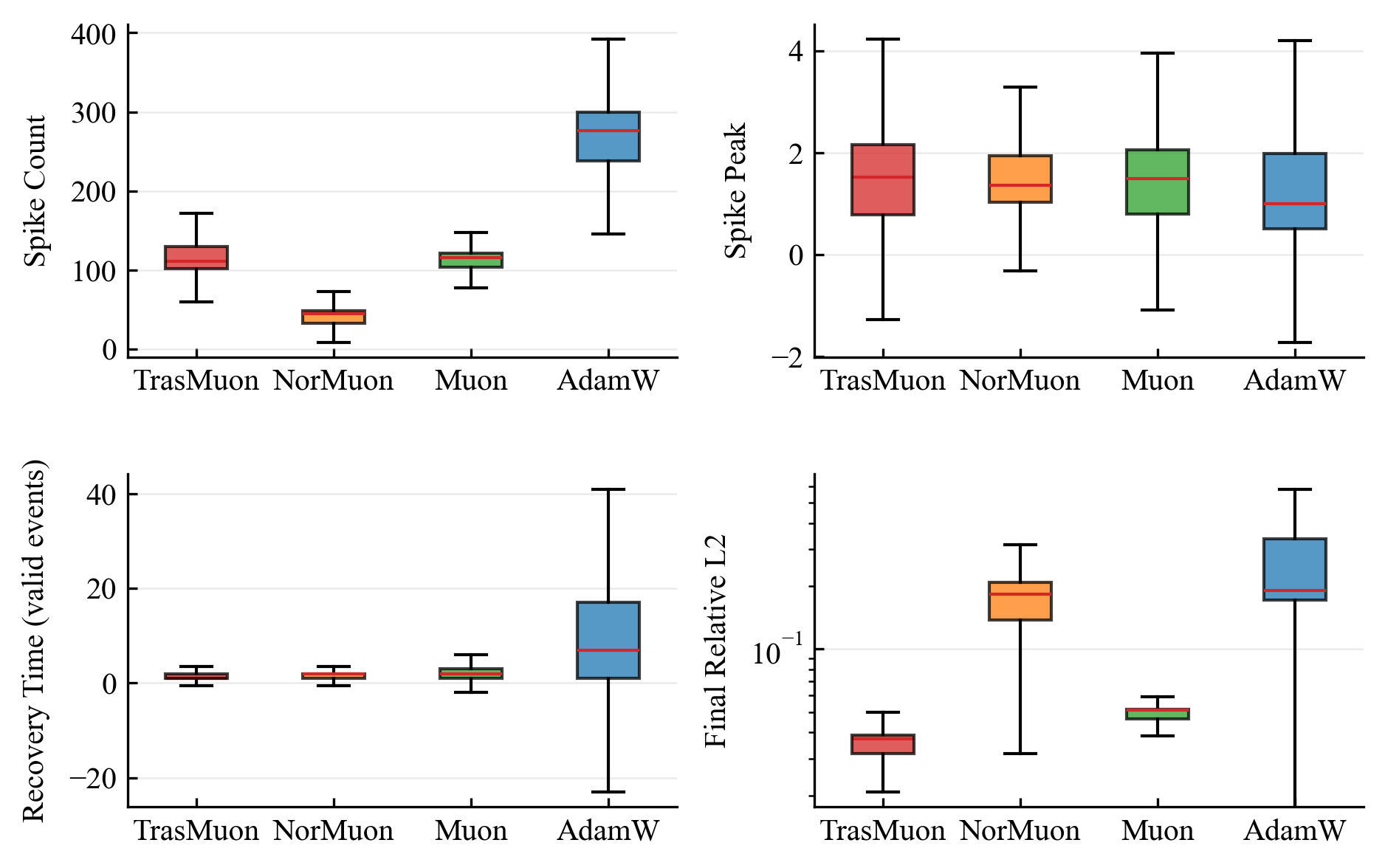}
  \caption{\textbf{PINN Helmholtz(\(k=2\)): metric distributions across seeds.}
  Boxplots summarize spike count, spike peak, recovery time (valid events), valid-event rate, and final relative \(L_2\) error across seeds under the same training protocol.
  Each box shows the median and interquartile range (IQR); whiskers indicate the remaining spread.}
  \label{fig:pinn_k2_metric_distributions}
\end{figure}

\section{Supplementary Details for the Mechanistic Study}
\label{app:toy2_supp}

This appendix provides protocol-level details and supporting evidence for the mechanistic study in Section~\ref{sec:toy2}.
The intent is two-fold: (i) to make the stress protocol fully reproducible, and (ii) to document a minimal, time-aligned evidence chain
that is \emph{consistent with} the intended energy-indexed, feature-wise clipping mechanism under a controlled intervention.
We do not introduce new claims beyond Section~\ref{sec:toy2}.

\subsection{Reproducible protocol}
\label{app:toy2_protocol}

\paragraph{Quadratic objective and stiffness.}
We optimize a matrix parameter $W\in\mathbb{R}^{d\times d}$ under
\begin{equation}
f(W) \;=\; \tfrac12 \|A W B - T\|_F^2,
\end{equation}
where $A=U\Sigma_AU^\top$ and $B=V\Sigma_BV^\top$.
Here $U,V$ are orthogonal matrices and $\Sigma_A,\Sigma_B$ are diagonal spectra constructed to achieve a target condition number
$\kappa\in\{10^2,10^4,10^6\}$.
We repeat each configuration over multiple random seeds/rotations and report robust statistics (median and IQR).

\paragraph{Column-localized burst injection (controlled intervention).}
Every $K_{\mathrm{out}}$ steps we inject an outlier event by amplifying a small subset of momentum columns.
Concretely, we sample $\mathcal{J}\subseteq[d]$ with $|\mathcal{J}|=s\ll d$ and apply
\begin{equation}
\widetilde{M}_{t,\cdot j} \;=\;
\begin{cases}
a\,M_{t,\cdot j}, & \text{if } j\in\mathcal{J},\\[2pt]
M_{t,\cdot j}, & \text{otherwise},
\end{cases}
\qquad a>1,
\label{eq:toy2_burst_app}
\end{equation}
then form the optimizer update at that step using $\widetilde{M}_t$ in place of $M_t$.
This intervention changes the \emph{optimization dynamics} but keeps the underlying objective $f(W)$ unchanged.

\paragraph{Preserving vs.\ breaking feature/column semantics (\texttt{fix\_V}).}
Feature-wise clipping is axis-aligned (column-wise), so its effect depends on whether the chosen column axis remains meaningful.
In the main setting (\texttt{fix\_V=True}), we preserve the column basis used for burst injection, so ``columns'' correspond to consistent feature axes across training and injected events.
As a boundary condition (\texttt{fix\_V=False}), we apply an additional orthogonal mixing in column space, so injected energy is dispersed across columns; in this case, selective column-wise clipping is not expected to provide the same advantage.

\subsection{Minimal evidence chain (time-aligned observables)}
\label{app:toy2_chain}

Section~\ref{sec:toy2} argues that spike suppression is consistent with the following ordering under a controlled intervention:
\emph{outlier injection} $\rightarrow$ \emph{relative energy increases} $\rightarrow$ \emph{stronger applied clipping} $\rightarrow$ \emph{reduced loss spikes}.
We summarize the corresponding observables, which are directly logged and visualized in Fig.~\ref{fig:toy2_closed_loop}.

\paragraph{(1) Outlier injection increases relative energy.}
Eq.~\eqref{eq:toy2_burst_app} increases the column energies
$E_{t,j}=\sum_i \widetilde{M}_{t,ij}^2$ for $j\in\mathcal{J}$.
We track the relative energy ratio
\begin{equation}
r_{t,j}\;=\;\frac{E_{t,j}}{E_t^{\mathrm{ref}}+\epsilon},
\end{equation}
where $E_t^{\mathrm{ref}}$ is the running reference used by \textsc{TrasMuon}.
We visualize robust summaries such as $r_{q95}$ and $r_{\max}$, which rise at injected outlier steps (Fig.~\ref{fig:toy2_closed_loop}, top).

\paragraph{(2) Higher relative energy is followed by stronger applied clipping.}
\textsc{TrasMuon} produces damping-only clipping coefficients $c_{t,j}^{\mathrm{used}}\in[c_{\min},1]$ that decrease with $r_{t,j}$.
Consistent with this design, outlier steps are followed by a decrease in the applied signal, visible via
$c_{\mathrm{used,min}}=\min_j c_{t,j}^{\mathrm{used}}$ (Fig.~\ref{fig:toy2_closed_loop}, bottom).
This time alignment is consistent with the intended ordering ``energy rise $\rightarrow$ clipping increase'' under the intervention.

\paragraph{(3) Applied clipping attenuates column updates (selectively).}
Given the matrix-form update,
\begin{equation}
\Delta W_t \;=\; -\,\hat{\eta}_t\,O_t^{\mathrm{base}}\,\mathrm{diag}(c_t^{\mathrm{used}}),
\end{equation}
each column update magnitude is scaled by $c_{t,j}^{\mathrm{used}}$:
\begin{equation}
\|\Delta W_{t,\cdot j}\|_2
\;=\;
c_{t,j}^{\mathrm{used}}\;\hat{\eta}_t\;\|O_{t,\cdot j}^{\mathrm{base}}\|_2
\;\le\;
\hat{\eta}_t\;\|O_{t,\cdot j}^{\mathrm{base}}\|_2.
\label{eq:toy2_col_atten}
\end{equation}
Thus clipping is \emph{selective}: only columns with $c_{t,j}^{\mathrm{used}}<1$ are damped, while the structured direction $O_t^{\mathrm{base}}$ is preserved.

\paragraph{(4) Spike suppression and objective improvement.}
Consistent with (1)--(3), \textsc{TrasMuon} reduces loss spikes around outlier events (Fig.~\ref{fig:toy2_loss_zoom}) and achieves lower final objective values than the backbone under matched compute (Table~\ref{tab:toy2_fixV_true}).
Spike metrics (count/peak) are computed using the same deterministic detection rule across methods; details are provided in our experiment scripts and plotting utilities.

\paragraph{Controls that break the chain.}
We include two controls that remove key requirements of the mechanism:
(i) \textsc{TrasMuon-noClip} sets $c_t^{\mathrm{used}}\equiv \mathbf{1}$, removing the attenuation in Eq.~\eqref{eq:toy2_col_atten}; empirically it behaves similarly to NorMuon (Table~\ref{tab:toy2_fixV_true}).
(ii) Under \texttt{fix\_V=False}, the injected energy is dispersed across columns, so clipping is no longer aligned with injected directions; correspondingly, the advantage of feature-wise clipping diminishes (Table~\ref{tab:toy2_fixv_false}).




\begin{table}[htb]
\centering
\caption{\textbf{Toy-2 boundary condition (\texttt{fix\_V=False}).}
When feature/column semantics are broken by column-space mixing, the advantage of feature-wise clipping diminishes.}
\label{tab:toy2_fixv_false}
\small
\begin{tabular}{lcc}
\toprule
Method & Spike Count & Final Loss \\
\midrule
NorMuon & 79 \tiny{(74,86)} & \textbf{1.1e+06} \tiny{(8.4e+05,1.6e+06)} \\
\textsc{TrasMuon}-noClip & 80 \tiny{(74,87)} & 1.4e+06 \tiny{(9.8e+05,1.7e+06)} \\
\textsc{TrasMuon}-clip-only & 74 \tiny{(67,80)} & 1.5e+06 \tiny{(1.2e+06,1.8e+06)} \\
\textsc{TrasMuon}-clip+SF & \textbf{72} \tiny{(65,80)} & 1.3e+06 \tiny{(9.4e+05,1.7e+06)} \\
\bottomrule
\end{tabular}
\end{table}

\end{document}